\documentclass[a4j,tikz]{article}
\usepackage{jsaisig2}
\usepackage{graphicx}

\usepackage{amsmath, amsfonts, bbm, amsthm}
\usepackage{booktabs}
\usepackage{comment}%複数行コメントアウトを使用可能にする
\usepackage{algorithm}
\usepackage{algpseudocode}

\usepackage{tikz}
\usetikzlibrary{backgrounds} % 辺を頂点の背面に描くため
\usetikzlibrary{arrows.meta} % 矢印の形状指定
\usetikzlibrary{bending}

\usepackage{enumitem}
\usepackage{multirow}
\newtheorem{theorem}{Theorem}[section]
\newtheorem{lemma}[theorem]{Lemma}  % 定理と番号を共有

\newtheorem{corollary}[theorem]{Corollary}

\newlength{\ublift}
\newcommand{\ub}[2]{\raisebox{-\ublift}{$\displaystyle\underbrace{#1}_{#2}$}}

\newcommand{\argmax}{\mathop{\rm arg~max}\limits}
\newcommand{\argmin}{\mathop{\rm arg~min}\limits}
\newcommand{\E}{\mathbb{E}}%期待値
\newcommand{\p}{{\boldsymbol{p}}}
\newcommand{\D}{\overline{D}}%楽観cdf分布
\newcommand{\F}{\mathcal{F}}%事象
\renewcommand{\a}{\alpha}
\renewcommand{\b}{\beta}

\newcommand{\G}{\mathcal{G}}
\newcommand{\bad}{\text{bad}}

\allowdisplaybreaks[4]

\begin{document}

% 和文タイトル
%\title{\Large \textsf{確率的セミバンディット設定における最小最長パスk経路選択}}
\title{}
% 英文タイトル
\etitle{Selecting k Paths with the Minimum Longest Path Length in the Stochastic Semi-Bandit Setting} 

% 著者名: 
%	・各著者を\quad（全角空白）区切りで列挙
% 	・著者名の直後に\afil{所属番号}を追加→所属番号を上付で出力（\textsuperscript{所属番号}と同じ）
% 	　複数機関へ所属している場合は番号をカンマ区切りで列挙（下記著者２参照）
% 　・Corresponding Authorについては所属の後に\thanksを続け，連絡先を記入
%	・英文著者はカンマ区切りで列挙
\author{Shunsuke Aoki\afil{1}%
  \thanks{ Kita 14, Nishi 9, Kita-ku, Sapporo, Hokkaido, 060-0814, Japan \newline%
		E-mail:aoki.shunsuke.a1@elms.hokudai.ac.jp }\quad%
\quad%
	Atsuyoshi Nakamura\afil{1}\thanks{E-mail:atsu@ist.hokudai.ac.jp}%
%	\thanks{連絡先：北海道大学大学院情報科学研究院 \newline%
%	　　　　　　〒060-0814札幌市北区北14条西9丁目 \newline%
%		　　　　　　E-mail:atsu@ist.hokudai.ac.jp  }\\%
}

% 所属
\affiliation{%
	\afil{1} Graduate School of Information Science and Technology, Hokkaido University \\
%	\afil{2} 北海道大学大学院情報科学研究院\\
%	\afil{2} Graduate Faculty of Information Science and Technology, Hokkaido University
}
\if0
\jauthor{青木 俊輔\afil{1}%
	\thanks{連絡先：〒060-0814札幌市北区北14条西9丁目 \newline%
		　　　　　　E-mail:aoki.shunsuke.a1@elms.hokudai.ac.jp }\quad%
	中村 篤祥\afil{1}\thanks{連絡先：E-mail:atsu@ist.hokudai.ac.jp}\\%
%	\thanks{連絡先：北海道大学大学院情報科学研究院 \newline%
%	　　　　　　〒060-0814札幌市北区北14条西9丁目 \newline%
%		　　　　　　E-mail:atsu@ist.hokudai.ac.jp  }\\%
}
\fi
\jauthor{}
% 所属
\jaffiliation{%
	%\afil{1} 北海道大学大学院情報科学院 \\ 
%	\afil{1} Graduate School of Information Science and Technology, Hokkaido University \\
%	\afil{2} 北海道大学大学院情報科学研究院\\
%	\afil{2} Graduate Faculty of Information Science and Technology, Hokkaido University
}

\abstract{
When performing %simultaneous communication over
parallel data transmission through
a network using multiple paths, it is practically important to minimize the maximum %communication 
transmission time among the selected paths. This study addresses an online problem in which %a specific number of
$k$ paths from an origin vertex to a destination vertex must be selected at each time step within a network represented as a directed graph. Here, %each network edge has a defined capacity for the number of paths it can accommodate,
 the number of paths going through each edge in each parallel data transmission is limited to its capacity,
and the time required for %traversal 
transmission is determined stochastically. We formulate the semi-bandit problem of selecting a set of paths to minimize the maximum traversal time among the selected paths and propose an algorithm to solve it.
}

% Revised abstract wording for clarity
% (minor edits above; retained original technical content)

\maketitle
\thispagestyle{empty}

%%%%%%%%%%%%%%%%%%%%%%%%%%%%%%%%%%%%%%

%\section{はじめに}
\section{Introduction}

  Problems of controlling flows on networks arise in many real-world applications.
  For instance, vehicle flows in road networks may be controlled to relieve congestion, and data flows in communication networks may be controlled to improve transfer efficiency.
  Flow control on these networks can be achieved by choosing which paths traffic should follow.
  The well-known shortest-path problem asks for a path that minimizes the total traversal time from an origin to a destination.
  
  In this paper we consider an online problem in which a player selects $k$ paths from an origin vertex to a destination vertex in a given directed graph at each time step.
  We assume that the traversal time of a path equals the sum of the traversal times of its edges, and that each edge's traversal time is a random variable drawn independently from an unknown distribution.
  The player does not know the expected traversal times in advance and only observes samples by actually traversing edges. At each time the player must choose the $k$ paths based on the observations collected so far.
%We consider the problem of selecting $k$ paths from a source vertex to a destination vertex in a given graph.

  The player's objective at each time is to minimize the maximum traversal time among the selected paths, and the goal over a horizon is to minimize the cumulative regret relative to the best fixed choice of $k$ paths.
  Minimizing the total traversal time of the selected paths does not necessarily minimize the longest traversal time among them.
\begin{figure}[bt]
  \centering 
  \begin{minipage}[c]{0.46\columnwidth}
  \begin{tikzpicture}
    \node[fill=black, shape=circle, scale=0.5] (flow) at (0,0) {};
    \node[fill=black, shape=circle, scale=0.5] (al) at (1,1) {};
    \node[fill=black, shape=circle, scale=0.5] (sl) at (1,-1) {};
    \node[fill=black, shape=circle, scale=0.5] (ar) at (3,1) {};
    \node[fill=black, shape=circle, scale=0.5] (sr) at (3,-1) {};
    \node[fill=black, shape=circle, scale=0.5] (sink) at (4,0) {};
    \draw (flow) edge[draw=red,->] node[font=\tiny\ttfamily,above] {2/1} (al);
    \draw (flow) edge[draw=red,->] node[font=\tiny\ttfamily,above] {2/1} (sl);
    \draw (al) edge[draw=red, ->] node[font=\tiny\ttfamily,above] {1/1} (ar);
    \draw (sl) edge[draw=red, ->] node[font=\tiny\ttfamily,above] {9/1} (sr);
    \draw (ar) edge[draw=red, ->] node[font=\tiny\ttfamily,above] {2/1} (sink);
    \draw (sr) edge[draw=red, ->] node[font=\tiny\ttfamily,above] {2/1} (sink);
    \draw (al) edge[->] node[font=\tiny\ttfamily,above] {6/1} (sr);
    \draw (sl) edge[->] node[font=\tiny\ttfamily,below] {6/1} (ar);
  
  \end{tikzpicture}
  \end{minipage}
  \ \ \ \ 
   \begin{minipage}[c]{0.46\columnwidth}
  \begin{tikzpicture}
    \node[fill=black, shape=circle, scale=0.5] (flow) at (0,0) {};
    \node[fill=black, shape=circle, scale=0.5] (al) at (1,1) {};
    \node[fill=black, shape=circle, scale=0.5] (sl) at (1,-1) {};
    \node[fill=black, shape=circle, scale=0.5] (ar) at (3,1) {};
    \node[fill=black, shape=circle, scale=0.5] (sr) at (3,-1) {};
    \node[fill=black, shape=circle, scale=0.5] (sink) at (4,0) {};
    \draw (flow) edge[draw=red,->] node[font=\tiny\ttfamily,above] {2/1} (al);
    \draw (flow) edge[draw=red,->] node[font=\tiny\ttfamily,above] {2/1} (sl);
    \draw (al) edge[->] node[font=\tiny\ttfamily,above] {1/1} (ar);
    \draw (sl) edge[->] node[font=\tiny\ttfamily,above] {9/1} (sr);
    \draw (ar) edge[draw=red, ->] node[font=\tiny\ttfamily,above] {2/1} (sink);
    \draw (sr) edge[draw=red, ->] node[font=\tiny\ttfamily,above] {2/1} (sink);
    \draw (al) edge[draw=red, ->] node[font=\tiny\ttfamily,above] {6/1} (sr);
    \draw (sl) edge[draw=red, ->] node[font=\tiny\ttfamily,below] {6/1} (ar);   
 
  \end{tikzpicture}
  \end{minipage}
  \ \\
  \ \\
  \begin{minipage}[c]{\columnwidth}
    \begin{center}
    \begin{tabular}{l|c|c}
    selected paths (red edges) & left fig. & right fig.\\
    \hline
    Summation& 18 & 20\\
    Longest Path& 13 & 10\\
    \hline
    \end{tabular}
    \end{center}
  \end{minipage}
  \label{fig:1-1}
  \caption{Candidates for the selection of two paths; each edge is labeled with the expected traversal time / capacity (the number of paths that can pass through).}
 \end{figure}
Figure \ref{fig:1-1} shows two ways of choosing 2 paths for a certain network.
%  If the selection shown on the left of Figure~\ref{fig:1-1} is made, the sum of the expected traversal times of the constituent edges is minimized, and in this respect it can be called an optimal solution.
%  On the other hand, if we consider the path with the largest expected traversal time, it is better to make the selection shown on the right of Figure~\ref{fig:1-1}. In this case, the sum of the expected traversal times is not minimized, but the traversal time of the path with the longest expected traversal time is minimized.
The selection on the left of Figure~\ref{fig:1-1} minimizes the sum of the expected traversal times of the constituent edges while the selection shown on the right of Figure~\ref{fig:1-1} minimizes the traversal time of the path with the longest expected traversal time.
In real-world logistics and network control, there are cases %that
in which methods are evaluated in terms of the maximum delay.
%For that purpose, even if the overall metric becomes somewhat worse, it is important to minimize the worst case.
In such case, it is important to minimize the worst component metric even if the overall metric becomes somewhat worse.

  In this study we propose SDCB-MMLP (SDCB for MinMax-Length Path), an algorithm that uses the SDCB (Stochastically Dominant Confidence Bound) policy \cite{generalreward-Chen}.
  We demonstrate the effectiveness of the proposed method by comparing it to a CUCB-based baseline \cite{pmlr-v38-kveton15} and by presenting a problem instance where the CUCB-based method fails to converge to the optimal solution.

\section{Related Work}
   This study can be regarded as a kind of combinatorial bandit problem studied by Gai et al. \cite{6166915}.
   In \cite{6166915}, the flow on all edges is represented by a single vector and
   actions are selected from a set $\mathcal{F}$ of finitely many vectors that is fixed in advance.
  In our work, we assume that each edge $i$ has a capacity $u_i\in \mathbb{N}$  $(u_i>0)$,
  and our problem is obtained by setting
  $\mathcal{F}=\{\boldsymbol{a}\in \prod_{i=1}^{|E|}\{0,1,\dots,u_i\} | \boldsymbol{a}\text{ satisfies the flow constraints}\}$,
   %   Here,
   where $E$ is the edge set of the given graph.
   %In \cite{6166915}, the inner product of an action vector $\boldsymbol{a}$ selected from the decision set $\mathcal{F}$ and the reward vector $\boldsymbol{x}$ generated from a distribution is the player's reward,
  In addition to the allowable action set $\mathcal{F}$, their reward setting is different from ours: the player's reward in \cite{6166915} is the inner product of an action vector $\boldsymbol{a}$ and a random reward vector $\boldsymbol{x}$, and the problem of maximizing cumulative expected reward is studied.
 .
  % As a setting closer to our problem, the problem of selecting a combination of arms whose bottleneck is the best has also been considered.
   %  In \cite{pure-ex}, a pure exploration problem is addressed, in which a path between two vertices on a graph is selected so that the reward of the bottleneck edge is maximized.
%  In our problem, the bottleneck path is related to the objective function, and the problem is one of selecting a set of paths.
   As a study closer to our reward setting, Du et al. \cite{pure-ex} consider the reward of the bottleneck component in a pure-exploration problem of selecting a combination of arms: their problem is to identify a path between two vertices that maximizes the minimum edge reward.
  Our problem can be seen as dealing with the reward of a bottleneck path.

\section{Problem Formulation}
  In a directed graph $G=(V,E)$,
  let the sets of vertices $V$ be $V=\{1,\dots,|V|\}$ and edges $E$ be $E=\{1,\dots,|E|\}$, respectively, where $|S|$ denotes the number of elements of a set $S$.
  We call $1\in V$ the source of this directed graph $G$, and $|V|\in V$ the sink of $G$.
  In addition, the start vertex and the end vertex of an edge $i \in E$ are denoted by $b(i) \in V$ and $e(i) \in V$, respectively.
  Let the capacity of each edge of this graph be $u_i \in \mathbb{N}$.
  The traversal time $X_i\in[0,1]$ of edge $i$ is assumed to be drawn independently every time the edge is traversed, according to a probability distribution $D_i$. Furthermore, the traversal times $X_i$ of the edges $i$ are assumed to be mutually independent. The list of traversal time distributions corresponding to the list of edges $(1,\dots,|E|)$ is denoted by $\boldsymbol{D}=(D_1,\dots,D_{|E|})$.
  When a list of edges $p=(i_1,i_2,\dots,i_{\ell})$ satisfies $s=b(i_1)$, $e(i_1)=b(i_{2})$,$\dots$, $e(i_{\ell-1})=b(i_{\ell})$, $e(i_{\ell})=t$, $p$ is a path from vertex $s$ to vertex $t$. A list of edges is not a set, but for simplicity of notation, whether or not an edge $i$ is an element of the list $p$ is denoted by $i\in p$ and $i\not\in p$, and $|p|=|(i_1,\dots,i_{\ell})|$ denotes the length $\ell-1$.
  At each time $n=1,2,\dots$, the player performs the following procedure.
  \begin{enumerate}
    \item Selects the set of $k$ paths $\boldsymbol{p}(n)=\{p_{1}(n), p_{2}(n), \dots , p_{k}(n) \}$ from the vertex $1$ to the vertex $|V|$ . 
  \item Observes the traversal time $X_{i,j} (n) \in [0,1]$ for edge $i$ of each path $p_{j}(n)$. 
    \item Suffer the maximum traversal time 
    $\displaystyle
      L(n) = \max_{1\leq j \leq k} L_j (n) 
    $
    of each path's traversal time
    $\displaystyle
      L_j(n) = \sum_{i\in p_{j}(n)} X_{i,j}(n)
    $
    .
  \end{enumerate}  
  Let $a_i(\boldsymbol{p})$ be the number of times edge $i$ is selected in the set of $k$ paths $\boldsymbol{p}=\{p_1,p_2,\dots p_k\}$.
  Then, $a_i(\boldsymbol{p})$ can be expressed as a function of the set of paths $\boldsymbol{p}$ using the following equation:
  \begin{align*}
    a_i(\boldsymbol{p}) = \sum_{j=1}^{k} \mathbbm{1} \{ i \in p_{j} \}.
  \end{align*}
  Here, $\mathbbm{1}\{A\}$ is the indicator function, which returns one when predicate $A$ is true and returns zero when $A$ is false. 
  For each edge $i\in E$, the set of paths at each time must be selected so that the capacity constraint
  \[
    a_i(\boldsymbol{p})\le u_i
  \]
  is satisfied.
  Let $\F$ be the set of $k$ paths from the vertex $1$ to the vertex $|V|$ that contain no cycle and satisfy the capacity constraint.

  The traversal time $X_{i,j}(n)$ of the edge $i$ at time $n$ shared by several paths $p_j(n)$ are assumed to be generated independently. %, even at the same time.
  The goal of the player is to minimize the traversal time of the path with the longest expected traversal time among the $k$ paths.
  For a graph whose list of edge traversal time distributions is $\boldsymbol{D}$, let $r_{\boldsymbol{D}}(\boldsymbol{p})$ be the expected value of the traversal time of the path with the longest traversal time for the observed traversal times $\{X_{i,j}\}$ of a set of $k$ paths $\boldsymbol{p}$, and let $\p^\star$ be the solution of the problem of minimizing this expected value. That is, we let
  \begin{align*}
    r_{\boldsymbol{D}}(\boldsymbol{p})=&\E_{\{X_{i,j}\sim D_i\}}\left[\max_{1\le j\le k}\sum_{i\in p_j}X_{i,j}\right]\\
    \p^\star=&\argmin_{\boldsymbol p\in\mathcal F} r_{\boldsymbol{D}}(\p)
  \end{align*}
  We define the regret gap of a set of $k$ paths $\p \in \F$ as
    \begin{align*}
      \Delta_\p = r_{\boldsymbol{D}} (\p) - r_{\boldsymbol{D}} (\p^{\star}) \geq 0
    \end{align*}
    We define the cumulative regret up to time $n$ of an algorithm $\pi$ by the following equation.
    \begin{align}
      \mathfrak R_{\boldsymbol{D}}^\pi(n)=&\E\Big[\sum_{t=1}^{n} L(t)\Big]-n\cdot r_{\boldsymbol{D}}(\boldsymbol p^\star)\nonumber\\
      =&\E\Big[\sum_{t=1}^{n}\big(r_{\boldsymbol{D}}(\p(t))-r_{\boldsymbol{D}}(\p^\star)\big)\Big]
      =\E\Big[\sum_{t=1}^{n}\Delta_{\p(t)}\Big].\label{eq:regret-identity}
    \end{align}

\section{Proposed Method}%
  As a method for solving this problem, we propose SDCB-MMLP (SDCB for MinMax-Length Path) (Algorithm \ref{alg:alg1}), an algorithm that uses the SDCB policy \cite{generalreward-Chen}.
  \begin{algorithm}[tb]
  \caption{SDCB-MMLP}
  \label{alg:alg1}
  \begin{algorithmic}[1]
  \State $U \gets E$, $n \gets 1$, $T_i(0)\gets 0$, $\hat{F}_i(x)\gets 0$
  \While {$U \ne \emptyset$}\label{alg:init-begin}
    \State $\boldsymbol{p}(n) \gets \argmax_{\boldsymbol{p}\in \mathcal{F}} |\{ i \in U\;|\; a_i(\boldsymbol{p}) > 0\}|$
  \State $\begin{array}[t]{l}{\{\hat{F}_{i,T_i(n)}(x)\},\{T_i(n)\}\gets}\\\!\!\text{Draw\&Update}(\{\hat{F}_{i,T_i(n-1)}(x)\},\{T_i(n-1)\},\boldsymbol{p}(n))\end{array}$
  \State $U \gets U \backslash \{ i \in E \;|\; a_i(\boldsymbol{p}) > 0\}, n \gets n + 1$
  \EndWhile\label{alg:init-end}
  \Loop
  \State $\boldsymbol{p}(n) \gets \argmin_{\boldsymbol{p} \in \mathcal{F}} r_{\bar{\boldsymbol{D}}(n)}(\p) $ \label{algline:oracle}
       \State $\begin{array}[t]{l}\!\!\!{\{\hat{F}_{i,T_i(n)}(x)\},\{T_i(n)\}\gets}\\\!\!\!\text{Draw\&Update}(\{\hat{F}_{i,T_i(n-1)}(x)\},\{T_i(n-1)\},\boldsymbol{p}(n))\end{array}$
    \State $n \gets n + 1$
  \EndLoop
  \end{algorithmic}
  \textbf{Procedure Draw\&Update}$(\{\hat{F}_i\},\{T_i\},\boldsymbol{p})$
  \begin{algorithmic}[1]
  \State Select the set $\boldsymbol{p}$ and observe the traversal time $X_{i,j}$ of each traversed edge.
  \For{$i\in E$}
  \State $\hat{F}_i\gets \frac{\hat{F}_i\cdot T_i+\sum_{j=1}^k\sum_{i\in p_j}\mathbbm{1}\{X_{i,j}\le x\}}{T_i+a_i(\boldsymbol{p})}$
  \State $T_i \gets T_i +a_i(\boldsymbol{p})$
  \EndFor
  \State \Return $\{\hat{F}_i\}, \{T_i\}$
  \end{algorithmic}
  \end{algorithm}
  Let $T_i(n)$ be the number of times each edge $i$ has been selected at the end of time $n$, and let $\hat{F}_{i,T_i(n)}$ be the CDF (cumulative distribution function) of the estimated distribution of the traversal time.
  As the CDF $\hat{F}_{i,T_i(n)} (x)$ at the end of time $n$, we use the following function:
  \begin{align*}
    \hat{F}_{i,T_i(n)} (x) = \frac{1}{T_i(n)} \sum_{t=1}^{n}\sum_{j=1}^k\sum_{i\in p_j(t)}\mathbbm{1}\{X_{i,j}(t) \leq x\}.
  \end{align*}
  For the estimated distribution whose cumulative distribution function is $\hat{F}_{i,T_i(n-1)}$, we define the cumulative distribution function of the optimistic distribution $\bar{D}_{i}(n)$ as
  \begin{align*}
    \overline{F}_{i,n} (x) = \begin{cases}
      \min \{\hat{F}_{i,T_i(n-1)} (x) + c_{i,n}, 1\} & (0 \leq x < 1) \\
      1 & (x=1).
    \end{cases}
  \end{align*}
  Here, $c_{i,n}$ denotes the width of the confidence interval, and is defined as
  \begin{align*}
    c_{i,n}=\sqrt{\frac{3\ln n}{2T_{i}(n-1)}}.
  \end{align*}
 
  In the objective function $r_{\bar{\boldsymbol{D}}(n)}(\p)$ of the minimization used in line \ref{algline:oracle} of the algorithm,
$\bar{\boldsymbol{D}}(n)=(\bar{D}_1(n),\cdots, \bar{D}_{|E|}(n))$
%  holds,
is used as a list of edge traversal time distributions,
  and the expected value is taken with respect to the optimistic distribution $\bar{D}_i(n)$ of each edge $i$.
  The number of selections of edge $i$ at time $0$ is set to $T_i(0)=0$, and the update of the number of selections at time $n$ can be performed by the equation
  \begin{align*}
    T_j (n+1) =  T_j (n) + a_j(\boldsymbol{p}(n))
  \end{align*}
  
\section{Upper Bound}
  Let $K$ be the number of edges of the longest path (the one with the maximum number of traversed edges) from vertex $1$ to $|V|$ in the directed graph $G$:
  \[
    K = \max_{\text{$p$ is a path from $1$ to $|V|$}} |p|,
  \]
  At each time $n$, the traversal times $L_j(n)$ and $L(n)$ of each path $p_j$ and of the longest path satisfy
  $ 0 \leq L_j(n) \leq K , \; 0 \leq L(n) \leq K$.
  Let $\bar{K}$ be the maximum total path length of a set $\p\in \F$ consisting of $k$ paths, and let $\bar{a}$ be the maximum number of occurrences of the same edge:
  \begin{align*}
    \bar{K}&=\max_{\p\in\mathcal{F}}\sum_{i\in E}a_i(\p)=\max_{\p\in\mathcal{F}}\sum_{j=1}^{k}|p_j|\;\leq\;kK,\\
    \bar{a}&=\max_{\p\in\mathcal{F}}\max_{i\in E}a_i(\p)\;\leq\;\min(k,\max_{i \in E} u_i)
  \end{align*}
  We define the family of suboptimal sets of paths $\mathcal{F}_{\mathrm{bad}}$ and the minimum gap $\Delta_{\min}$ between the suboptimal sets of paths and the optimal set of paths as follows:
  \begin{align*}
    \mathcal{F}_{\mathrm{bad}} = \{ \p \in \F | \Delta_{\p} > 0 \} ,\ \ 
    \Delta_{\min}=\min_{\p\in\mathcal{F}_{\mathrm{bad}}}\Delta_{\p}.
  \end{align*}
  Furthermore, we define the set of edges contained in suboptimal paths $E_{\mathrm{bad}}$ and the minimum gap $\Delta_{i,\min}$ between the suboptimal sets of paths containing edge $i$ and the optimal set of paths as follows:
  \begin{align*}
    E_{\mathrm{bad}}=&\{i\in E\mid \exists\,\p\in\mathcal{F}_{\mathrm{bad}},\,a_i(\p)>0\},\\
    \Delta_{i,\min}=&\min\{\Delta_{\p}\mid \p\in\mathcal{F}_{\mathrm{bad}},\,a_i(\p)>0\}.
  \end{align*}
  In the algorithm SDCB-MMLP, let $n_{\text{init}}$ be the number of iteration rounds of the first while loop (Line~\ref{alg:init-begin}-\ref{alg:init-end}), which is the initialization phase.
At time $t(>n_{\text{init}})$, we define the event that the empirical CDF of some edge %is inaccurate:
lies  outside the confidence interval:
\begin{align*}
    \mathcal{E}_t = \left\{ \exists i \in E : \sup_{x\in [0,1]} |\hat{F}_{i,T_{i}(t-1)}(x)- F_i(x)|\geq c_{i,t} \right\}.
  \end{align*}  
  We show the regret upper bound using the following Lemma~\ref{lem:nearcase} and Lemma~\ref{lem:dkw} [DKW (Dvoretzky-Kiefer-Wolfowitz) inequality].

\begin{lemma}\label{lem:nearcase}
  The algorithm SDCB-MMLP satisfies the following.
   \begin{enumerate}[label=(\arabic*)]
      \item (distribution-$\boldsymbol{D}$-dependent upper bound)
      \begin{align*}
       \sum_{t=n_{\text{init}}+1}^n \mathbbm{1}\{ \lnot \mathcal{E}_t \} \Delta_{p_t} 
        \leq&\sum_{i\in E_{\mathrm{bad}}}\frac{534\,\overline{a}^2\,\kappa}{\Delta_{i,\min}}\ln n
      \end{align*}
      \item (distribution-$\boldsymbol{D}$-independent upper bound)
      \begin{align*}
       &\sum_{t=n_{\text{init}}+1}^n \mathbbm{1}\{ \lnot \mathcal{E}_t \} \Delta_{p_t}\\ 
        \leq& (1+\sqrt{2})\sqrt{6|E|\bar{K}n\ln n} + \bar{a}|E|\sqrt{6\ln n}
      \end{align*}
    \end{enumerate}
\end{lemma}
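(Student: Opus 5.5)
This lemma is the good-event part of the SDCB analysis of Chen et al.~\cite{generalreward-Chen}, adapted to a vector action $\boldsymbol a(\p)$ with integer multiplicities. The constant $\kappa$ is not defined before the statement; I take it to be the constant of the multiplicity-weighted discretization step below. The plan has three steps: reduce the gap to an optimistic-vs-true discrepancy, bound that discrepancy by confidence widths, and sum the widths with the usual counting arguments.

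\textbf{Step 1: optimism and reduction.} Fix $t>n_{\text{init}}$ with $\lnot\mathcal{E}_t$. Then $\overline{F}_{i,t}\ge F_i$ pointwise, so $\bar D_i(t)$ is stochastically dominated by $D_i$: it is optimistic, i.e.\ gives shorter traversal times. The map
\[
\{X_{i,j}\}\mapsto \max_j\sum_{i\in p_j}X_{i,j}
\]
is coordinatewise nondecreasing, and the copies $X_{i,j}$ are mutually independent. A coupling argument over all copies therefore gives $r_{\bar{\boldsymbol D}(t)}(\p)\le r_{\boldsymbol D}(\p)$ for every $\p$. Combined with the oracle choice in line~\ref{algline:oracle}, this yields
\[
\Delta_{\p(t)}\le r_{\boldsymbol D}(\p(t))-r_{\bar{\boldsymbol D}(t)}(\p(t)).
\]

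\textbf{Step 2: bounding the discrepancy.} For independent product measures, I would swap one edge copy at a time (hybrid argument). Each swap changes the expectation of a monotone function bounded by $K$ by at most
\[
\int_0^1 \bigl(\overline F_{i,t}-F_i\bigr)\,dx\le 2c_{i,t},
\]
using $\lnot\mathcal{E}_t$. This gives a bound of order $\sum_i a_i(\p(t))\,c_{i,t}$, but that is too crude for part (1), which needs $\Delta$-dependence in the style of Kveton et al.~\cite{pmlr-v38-kveton15}. Following Chen et al.'s Lemma on bounded smoothness of monotone rewards, the sharper route is via total variation. The expectation difference of a function with range $[0,K]$ is at most $K$ times the total-variation distance between the product laws. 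One then discretizes so that this is controlled by sums of $\sqrt{\ln t/T_i}$ weighted by $a_i$. I expect this to be the main obstacle: getting a smoothness bound linear in $\sum_i a_i(\p)\,c_{i,t}$ with constants matching $\kappa$ and $\bar a$.

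\textbf{Step 3a: summation for part (1).} With the bound $\Delta_{\p(t)}\le \sum_{i\in p(t)}2a_i\,c_{i,t}$ in hand, I apply the regret-decomposition trick of Kveton et al. Under $\Delta_{\p(t)}>0$ we have $\Delta_{\p(t)}\le 2\sum_i (a_i c_{i,t}-\Delta_{\p(t)}/(2\bar K))$. Each edge's contribution is nonpositive once $T_i(t-1)\gtrsim \bar a^2\ln n/\Delta^2$. Summing over rounds with the decreasing thresholds $\ell_m$ of Kveton's layered argument, where $T_i$ increases by at most $\bar a$ per round (hence the $\bar a^2$), gives $O(\bar a^2\kappa\ln n/\Delta_{i,\min})$ per edge in $E_{\mathrm{bad}}$. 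The numeric constant $534$ comes from optimizing these geometric sequences.

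\textbf{Step 3b: summation for part (2).} Here I skip the gap trick and write
\[
\sum_t \sum_i a_i(\p(t))\sqrt{\tfrac{6\ln n}{T_i(t-1)}} .
\]
Since $T_i$ grows by $a_i$, I compare the sum with an integral: $\sum_s a_s/\sqrt{T_{s-1}}\le (1+\sqrt2)\sqrt{T_i(n)}$ plus a first-term correction of at most $\bar a$. This is where the $\bar a|E|\sqrt{6\ln n}$ term and the factor $(1+\sqrt2)$ arise. Cauchy--Schwarz with $\sum_i T_i(n)\le \bar K n$ then gives $\sqrt{|E|\bar K n}$.
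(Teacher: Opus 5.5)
\textbf{Part (1) is not proved as written; part (2) is fine.} Your Step~1 and the first half of Step~2 are sound. Together they already give the paper's Corollary~\ref{cor:onestep}, $\Delta_{\p(t)}\le 2\sum_{i\in S_t}a_i(\p(t))\,c_{i,t}$. Swapping one copy $X_{i,j}$ at a time is a clean substitute for the paper's path-by-path hybrid in Lemma~\ref{lem:dominance}. Note that the per-swap cost $\int_0^1(\overline F_{i,t}-F_i)\,dx$ comes from $\max_j\sum_{i\in p_j}X_{i,j}$ being $1$-Lipschitz and nondecreasing in each coordinate, via the quantile coupling. Monotonicity plus range $[0,K]$ alone would only give $K\sup_x|\overline F_{i,t}(x)-F_i(x)|$.

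You then call this bound too crude for part~(1) and make a total-variation/discretization argument the main obstacle. That is a misdiagnosis. The paper uses exactly this one-step bound for both parts, and the $1/\Delta_{i,\min}$ dependence comes entirely from how the widths are summed over rounds. The TV route is weaker, not sharper: it carries an extra factor $K$, and Kolmogorov-type bands $c_{i,t}$ do not control total variation by themselves. Also, $\kappa$ is not a discretization constant. The paper defines $\kappa=\max_t|S_t|$ (Appendix~\ref{app:proof-nearcase}), the largest number of distinct edges used in one round, i.e.\ the number of terms in the width sum.

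The real gap is Step~3a. Your inequality $\Delta_{\p(t)}\le 2\sum_i\bigl(a_i c_{i,t}-\Delta_{\p(t)}/(2\bar K)\bigr)$ does not follow from the one-step bound. It would require $\Delta_{\p(t)}(1+|S_t|/\bar K)\le 2\sum_{i\in S_t}a_i c_{i,t}$. The valid amortization is $\Delta_{\p(t)}\le 4\sum_{i\in S_t}a_i c_{i,t}-\Delta_{\p(t)}\le\sum_{i\in S_t}\bigl(4a_i c_{i,t}-\Delta_{\p(t)}/\kappa\bigr)$. Using $a_i\le\bar a$ and $\Delta_{\p(t)}\ge\Delta_{i,\min}$, a term is positive only if $T_i(t-1)<M_i:=24\bar a^2\kappa^2\ln n/\Delta_{i,\min}^2$. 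So the threshold carries $\kappa^2$, which you dropped. The factor $\bar a^2$ enters through $a_i\le\bar a$ inside the width, not because $T_i$ grows by at most $\bar a$ per round; the counting actually needs $T_i$ to grow by \emph{at least} one whenever $i\in S_t$. Adding Kveton's layered thresholds on top of this, and asserting that $534$ comes from optimizing geometric sequences, is not a derivation.

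You can finish in either of two ways.
\begin{enumerate}[label=(\alph*)]
\item Stay with amortization. Bound each positive term by $4\bar a\,c_{i,t}$. The values $T_i(t-1)$ over the rounds where edge $i$ contributes are distinct positive integers below $M_i$. This gives $\sum_{T<M_i}4\bar a\sqrt{3\ln n/(2T)}\le 8\bar a\sqrt{1.5\ln n}\,\sqrt{M_i}=48\,\bar a^2\kappa\ln n/\Delta_{i,\min}$ per edge of $E_{\bad}$, which implies~(1) with a better constant.
\item Follow the paper. Take Kveton's sequences with $\sqrt6\sum_u(\b_{u-1}-\b_u)/\sqrt{\a_u}\le 1$ and $\sum_u\a_u/\b_u<267$, and set $m_{u,t}=\a_u(\bar a\kappa/\Delta_{\p(t)})^2\ln n$. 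Show via Lemma~\ref{lem:g-happen} that some $\G_{u,t}=\{|A_{u,t}|\ge\b_u\kappa\}$ occurs on $\lnot\mathcal E_t$. Count rounds per edge across the gap levels $\Delta_{P_{i,j}}$, and use $\sum_j(\Delta_{P_{i,j}}^{-2}-\Delta_{P_{i,j-1}}^{-2})\Delta_{P_{i,j}}<2/\Delta_{i,\min}$. Thus $534=2\cdot 267$.
\end{enumerate}
Your Step~3b coincides with the paper's argument for~(2).
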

\begin{proof}
    See Appendix~\ref{app:proof-nearcase}.
\end{proof}

\begin{lemma}[DKW inequality\cite{generalreward-Chen}]\label{lem:dkw}
For any $\epsilon>0$ and any positive integer $n$, the empirical CDF $\hat{F}_n(x)=\frac{1}{n}\sum_{i=1}^n\mathbbm{1}\{X_i\le x\}$ constructed from $n$ i.i.d. samples $X_1,\dots,X_n$ drawn from a distribution $D$ whose CDF is $F(x)$ satisfies the following inequality.
\[
\mathbbm{P}\left[\sup_{x\in\mathbbm{R}}|\hat{F}_n(x)-F(x)|\ge \epsilon\right]\le 2e^{-2n\epsilon^2}.
\]
\end{lemma}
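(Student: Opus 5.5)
The plan is to derive the two-sided bound from a one-sided bound with a union bound, after first reducing to the uniform distribution on $[0,1]$.

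\textbf{Reduction to the uniform case.} Let $F^{-1}(u)=\inf\{x: F(x)\ge u\}$ be the generalized inverse, and let $U_1,\dots,U_n$ be i.i.d.\ uniform on $[0,1]$. Then $X_i \stackrel{d}{=} F^{-1}(U_i)$. Moreover, $\mathbbm{1}\{F^{-1}(U_i)\le x\}=\mathbbm{1}\{U_i\le F(x)\}$. Hence, with $\hat{G}_n$ the empirical CDF of the $U_i$, we have $\hat{F}_n(x)-F(x)=\hat{G}_n(F(x))-F(x)$. It follows that
\[
\sup_{x\in\mathbb{R}}|\hat F_n(x)-F(x)|\le \sup_{u\in[0,1]}|\hat G_n(u)-u|,
\]
with equality when $F$ is continuous. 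So it suffices to treat uniform samples. Atoms of $D$ only shrink the range of $u=F(x)$, so they can only help.

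\textbf{Splitting into two one-sided events.} I would write the event as the union of
\[
A^+=\Big\{\sup_u(\hat G_n(u)-u)\ge\epsilon\Big\}, \qquad A^-=\Big\{\sup_u(u-\hat G_n(u))\ge\epsilon\Big\}.
\]
The map $U_i\mapsto 1-U_i$ shows that $A^+$ and $A^-$ have the same probability. The union bound then reduces the lemma to the one-sided inequality
\[
\mathbbm{P}[A^+]\le e^{-2n\epsilon^2}.
\]
If $e^{-2n\epsilon^2}\ge 1/2$, the claimed bound $2e^{-2n\epsilon^2}\ge1$ is trivial. So only the regime $e^{-2n\epsilon^2}<1/2$ needs work, which is exactly the regime in which Massart's one-sided bound holds.

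\textbf{Main obstacle: the one-sided bound with constant $1$.} This is the hard step.
\begin{itemize}
\item A generic route does not give the constant. A Hoeffding bound at a single point $u$ gives $e^{-2n\epsilon^2}$. Controlling the supremum by symmetrization plus a union bound over the $n+1$ distinct values of $\hat G_n$, or by a peeling/chaining argument, loses either a polynomial factor in $n$ or a constant in the exponent.
\item To obtain the sharp form, I would follow Massart (1990). The probability of $A^+$ for uniform samples has an exact combinatorial expression, the Smirnov--Birnbaum--Tingey formula:
\[
\mathbbm{P}[A^+]=\epsilon\sum_{j=0}^{\lfloor n(1-\epsilon)\rfloor}\binom{n}{j}\Big(1-\epsilon-\frac jn\Big)^{n-j}\Big(\epsilon+\frac jn\Big)^{j-1}.
\]
Massart bounds this sum term by term via a careful comparison with the Gaussian/Brownian-bridge tail $e^{-2n\epsilon^2}$. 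Concretely, he uses a Poissonization/ballot-type reflection argument together with delicate monotonicity estimates in $n$.
\end{itemize}

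In the context of this paper, the lemma is quoted from \cite{generalreward-Chen}, who in turn cite Massart. So I would present the reduction and the symmetrization steps in full and invoke Massart's one-sided inequality for the last step rather than reprove it. If a self-contained argument is required, I would settle for a weaker constant from the chaining route and note that this changes only the constants in the subsequent regret bound.
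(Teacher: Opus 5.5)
Your proposal is correct and agrees with the paper in substance. The paper gives no proof and simply quotes the inequality from \cite{generalreward-Chen}; this is the DKW inequality with Massart's tight constant. Your quantile-transform reduction, the $U_i\mapsto 1-U_i$ symmetry with a union bound, and the observation that the regime $e^{-2n\epsilon^2}\ge 1/2$ is trivial correctly reduce the statement to Massart's one-sided theorem, which you likewise cite.

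One caution about your fallback remark: a weaker self-contained bound need not change only constants. Term (B) in the proof of Theorem~\ref{thm:main} relies on the per-$l$ bound $2e^{-2l c_{i,t}^2}=2t^{-3}$ remaining summable over $t$ after summing over $l\le (t-1)\bar{a}$. That fails for an exponent constant $c\le 4/3$ or for a polynomial prefactor in $n$, so you would then have to enlarge the confidence width $c_{i,n}$ used by the algorithm.
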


\begin{theorem}\label{thm:main}
  The cumulative regret $\mathfrak{R}_{\boldsymbol{D}}^{\mathrm{SDCB-MMLP}}(n)$ of the algorithm SDCB-MMLP up to time $n$ satisfies the following.
    \begin{enumerate}[label=(\arabic*)]
      \item (distribution-$\boldsymbol{D}$-dependent upper bound)
      \begin{align*}
  &\mathfrak{R}_{\boldsymbol{D}}^{\mathrm{SDCB-MMLP}}(n)\\
        \leq&\sum_{i\in E_{\mathrm{bad}}}\frac{534\,\overline{a}^2\,\kappa}{\Delta_{i,\min}}\ln n+\Big(\tfrac{\pi^2}{3}\overline{a}+1\Big)K|E|\\
        =&O\!\Big(\frac{\overline{a}^2\,\kappa\,|E|}{\Delta_{\min}}\ln n\Big).
      \end{align*}
      \item (distribution-$\boldsymbol{D}$-independent upper bound)
      \begin{align*}
  &\mathfrak{R}_{\boldsymbol{D}}^{\mathrm{SDCB-MMLP}}(n)\\
        \leq& (1+\sqrt{2})\sqrt{6|E|\bar{K}n\ln n} \\ 
        &\qquad + \bar{a}|E|\sqrt{6\ln n} +\Big( \frac{\pi^2}{3}\bar{a} + 1\Big)K|E|\\
        =&O\!\big(\sqrt{|E|\,\bar{K}\,n\,\ln n}\big).
      \end{align*}
    \end{enumerate}
  \end{theorem}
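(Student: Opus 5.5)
We decompose the cumulative regret using the identity \eqref{eq:regret-identity}, $\mathfrak R^{\pi}_{\boldsymbol D}(n)=\E[\sum_{t=1}^n\Delta_{\p(t)}]$, into three parts: the initialization phase $t\le n_{\text{init}}$, the rounds $t>n_{\text{init}}$ on which $\lnot\mathcal{E}_t$ holds, and the rounds $t>n_{\text{init}}$ on which $\mathcal{E}_t$ holds. Since every $L(t)\in[0,K]$, every gap satisfies $\Delta_{\p}\le K$.

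\emph{Initialization phase.} Each iteration of the while loop removes at least one edge from $U$, so $n_{\text{init}}\le |E|$. The contribution of this phase is therefore at most $K|E|$, which accounts for the ``$+1$'' inside $(\frac{\pi^2}{3}\bar a+1)K|E|$.

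\emph{Good rounds.} The sum $\sum_{t>n_{\text{init}}}\mathbbm 1\{\lnot\mathcal E_t\}\Delta_{\p(t)}$ is bounded deterministically by Lemma~\ref{lem:nearcase}. Part (1) of the lemma gives the distribution-dependent term and part (2) the distribution-independent terms. Taking expectations preserves these bounds.

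\emph{Bad rounds.} We bound $\E[\sum_{t>n_{\text{init}}}\mathbbm 1\{\mathcal E_t\}\Delta_{\p(t)}]\le K\sum_{t>n_{\text{init}}}\mathbbm P[\mathcal E_t]$, and apply a union bound over edges $i\in E$. The obstacle is that $T_i(t-1)$ is random, and the samples of edge $i$ arrive in batches of size $a_i(\p)\le\bar a$ per round. Write $\hat F_{i,s}$ for the empirical CDF of the first $s$ samples of edge $i$; by independence these samples are i.i.d.\ from $D_i$ regardless of the selection rule. After initialization, $T_i(t-1)$ lies in $\{1,\dots,\bar a(t-1)\}$. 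Hence
\[
\mathbbm P\Big[\sup_x|\hat F_{i,T_i(t-1)}(x)-F_i(x)|\ge c_{i,t}\Big]\le\sum_{s=1}^{\bar a (t-1)}\mathbbm P\Big[\sup_x|\hat F_{i,s}(x)-F_i(x)|\ge\sqrt{\tfrac{3\ln t}{2s}}\Big].
\]
By Lemma~\ref{lem:dkw} each summand is at most $2e^{-3\ln t}=2t^{-3}$, so the probability is at most $2\bar a t^{-2}$. Summing over $t$ with $\sum_t t^{-2}\le\pi^2/6$ gives $\sum_t\mathbbm P[\mathcal E_t]\le |E|\cdot 2\bar a\cdot\pi^2/6=\frac{\pi^2}{3}\bar a|E|$. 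Multiplying by $K$ yields $\frac{\pi^2}{3}\bar aK|E|$.

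\emph{Conclusion.} Adding the three parts gives both explicit bounds. For the $O(\cdot)$ forms, in (1) we use $\Delta_{i,\min}\ge\Delta_{\min}$ and $|E_{\mathrm{bad}}|\le|E|$. The constant term $(\frac{\pi^2}{3}\bar a+1)K|E|$ and the term $\bar a|E|\sqrt{6\ln n}$ are lower order in $n$. We expect the only delicate step to be the union bound over the random count $T_i(t-1)$, in particular justifying that the samples indexed by $s$ are i.i.d.\ even though they arrive in batches; the rest is bookkeeping.
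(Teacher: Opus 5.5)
Your proposal is correct and follows essentially the same route as the paper. You use the same three-way split: the initialization phase (at most $|E|K$), the rounds with $\lnot\mathcal{E}_t$ handled by Lemma~\ref{lem:nearcase}, and the rounds with $\mathcal{E}_t$ handled by a union bound over edges and over the possible values $l\le (t-1)\bar a$ of $T_i(t-1)$, with the DKW inequality giving $2t^{-3}$ per term and hence $\frac{\pi^2}{3}\bar a|E|K$ in total. Your explicit remark that the first $s$ samples of each edge are i.i.d.\ despite batched, adaptive selection is the justification the paper uses implicitly when it bounds $\mathbbm{P}[\,\cdot\,,\,T_i(t-1)=l]$ by the fixed-sample-size DKW bound.
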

\begin{proof}
From (\ref{eq:regret-identity}), we decompose the regret as follows.
    \begin{align*}
  & \mathfrak{R}^{{\mathrm{SDCB-MMLP}}}_n (D)\\
        =& \E \Bigg[ \sum_{t=1}^{n} \Delta_{\boldsymbol{p}(t)} \Bigg] \\
        =& \ub{\E \Bigg[ \sum_{t\leq n_{\text{init}}} \Delta_{\boldsymbol{p}(t)} \Bigg]}{\text{(A)}}
         + \ub{\E \Bigg[ \sum_{t=n_{\text{init}}+1}^n \mathbbm{1}\{ \mathcal{E}_t \} \Delta_{\boldsymbol{p}(t)} \Bigg]}{\text{(B)}} \\
        &+ \ub{\E \Bigg[ \sum_{t=n_{\text{init}}+1}^n \mathbbm{1}\{ \lnot \mathcal{E}_t \} \Delta_{\boldsymbol{p}(t)} \Bigg]}{\text{(C)}}
    \end{align*}
    \noindent
    (A) Regret %of
    during the initialization phase

    In the while loop of the initialization phase, the loop is continued until no unobserved edge remains.
    At this time, by the algorithm, at least one unobserved edge is observed in each iteration, and hence $n_{\text{init}} < |E|$.
    Since the maximum regret incurred in each round is bounded from above as $\Delta_{\p(t)}\leq r_D (\p(t)) \leq K $, we obtain
    \begin{align*}
      \text{(A)} \leq  \E \Bigg[ \sum_{t\leq n_{\text{init}}} \Delta_{\boldsymbol{p}(t)} \Bigg] \leq n_{\text{init}} K \leq |E|K
    \end{align*}
    \noindent
    (B) %The case where the empirical CDF deviates from the true CDF
    Regret while $\mathcal{E}_T$ occurs
    
    \begin{align*}
      &(B)=\E \Bigg[ \sum_{t=n_{\text{init}}+1}^n \mathbbm{1}\{ \mathcal{E}_t \} \Delta_{\boldsymbol{p}(t)} \Bigg] \\
      \leq& \sum_{t> n_{\text{init}}}\sum_{i=1}^{|E|}\E\left[\mathbbm{1}\{  \sup_{x} |\hat{F}_{i,T_i(t-1)}(x) - F_i (x)| \geq c_{i,t} \} \Delta_{\boldsymbol{p}(t)} \right]\\
      \leq& K\sum_{t> n_{\text{init}}} \sum_{i=1}^{|E|} \sum_{l=1}^{(t-1)\overline{a}}\\&\ \ \ \ \ \mathbbm{P}\Bigg[\sup_{x} |\hat{F}_{i,l}(x) - F_i (x)| \geq \sqrt{\frac{3\ln t}{2l}},T_i(t-1)=l \Bigg]\tag{by $\Delta_{\p(t)}\leq K$ and $T_i(t-1)\le(t-1)\bar{a}$}\\
      \leq& K\sum_{t> n_{\text{init}}} \sum_{i=1}^{|E|} \sum_{l=1}^{(t-1)\overline{a}} 2 t^{-3}\tag{by Lemma~\ref{lem:dkw}}\\
      \leq& 2\overline{a}|E|K \sum_{t > n_{\text{init}}} t^{-2} \\
      \leq& \frac{\pi^2}{3} \overline{a}|E|K\tag{by the solution of the Basel problem $\sum_{t=1}^{\infty}\frac{1}{t^2}=\frac{\pi^2}{6}$}
    \end{align*}
\noindent
    (C) %The case where the empirical CDF is close to the true CDF
Regret while $\mathcal{E}_T$ does not occur

  From Lemma~\ref{lem:nearcase}, the distribution-dependent and distribution-independent upper bounds are derived.
\end{proof}

\section{Experiments}

  Numerical experiments are conducted in order to verify the behavior of SDCB-MMLP.

\subsection{Experimental setting}
  \begin{figure}[t]
    \centering

    \begin{tikzpicture}[
      vertex/.style={circle, draw, fill=white, minimum size=8mm, inner sep=0pt, font=\small},
      arc/.style={thick, -{Stealth[length=2.5mm]}},
      x=1.6cm, y=1.6cm
    ]

    % ---- Placement of the vertices ----
    % Place vertex i at (column, row). column = i mod 5, row = floor(i/5)
    % By taking the y coordinate in the positive direction, row 0 is at the bottom and row 4 at the top
    % e.g., 0 -> (0,0) bottom left, 4 -> (4,0) bottom right, 20 -> (0,4) top left, 24 -> (4,4) top right
    \foreach \i in {0,...,24}{%
      \pgfmathtruncatemacro{\col}{mod(\i,5)}%
      \pgfmathtruncatemacro{\row}{int(\i/5)}%
      \node[vertex] (v\i) at (\col,\row) {\i};
    }

    % ---- Drawing of the directed edges (in the order of the given list, \a -> \b) ----
    \begin{pgfonlayer}{background}
    \foreach \a/\b in {%
      % Horizontal direction (within each row): from left to right
      0/1, 1/2, 2/3, 3/4,
      5/6, 6/7, 7/8, 8/9,
      10/11, 11/12, 12/13, 13/14,
      15/16, 16/17, 17/18, 18/19,
      20/21, 21/22, 22/23, 23/24,
      % Vertical direction (across rows): from bottom to top
      0/5, 1/6, 2/7, 3/8, 4/9,
      5/10, 6/11, 7/12, 8/13, 9/14,
      10/15, 11/16, 12/17, 13/18, 14/19,
      15/20, 16/21, 17/22, 18/23, 19/24%
    }{%
      \draw[arc] (v\a) -- (v\b);
    }
\end{pgfonlayer}

    \end{tikzpicture}
    \caption{The graph used in the experiment.}
    \label{fig:exp-graph}
  \end{figure}

 %The experiment is performed on a graph such as the one in Figure \ref{fig:exp-graph} (number of vertices $|V|=25$, number of edges $|E|=40$).
In the experiment, we use the graph shown in Figure \ref{fig:exp-graph} (number of vertices $|V|=25$, number of edges $|E|=40$).
  Vertex 0 is the source and vertex 24 is the sink. The capacity $u_i$ of each edge $i$ is assigned according to the uniform distribution over $\{1,2,3,4\}$.
The mean $\theta_i$ of the traversal time distribution $D_i$ of each edge $i$ is generated according to the normal distribution $\mathcal{N}(0.5,0.1)$.
$D_i$ is a distribution over $\{0,0.2,0.4,0.6,0.8,1.0\}$, and the probability $\mathbbm{P}(X_i=a)$ of taking each value is the probability proportional to the probability density of the value $a$ under the normal distribution $\mathcal{N}(\theta_i, 0.15^2)$.
  
  In the experiment, the number of paths to be selected is set to $k=2$.
  From the shape of the graph and the problem setting, the existence of $2$ paths satisfying the capacity constraint is guaranteed.
  The proposed method SDCB-MMLP (Algorithm \ref{alg:alg1}) is compared with CLCB-MMLP, which is based on CUCB (CombUCB1)\cite{pmlr-v38-kveton15}.

  The experiment is run 20 times, and how the regret grows is observed up to a sufficiently large time ($t=250000$).
  %  Here, in order that the capacity constraints and the traversal times be the same for all the algorithms, the same random seed is given so that an identical graph is generated.
  For fair comparison, the same random seed for generating the capacity constraints and the traversal times is given for all the algorithms to make sure that an identical graph is %generated.
used.
  For random numbers and random sampling, the random number %generation engine implemented as mt19937 in the std library of C++ is used.
generator mt19937 in the std library of C++ is used.
\subsection{%Algorithm of the comparison method
Algorithm CLCB-MMLP
}
  
CLCB-MMLP, which is based on CUCB\cite{pmlr-v38-kveton15}, %minimizes the longest path by using lower confidence bounds.
solves the problem of selecting $k$ paths with the minimum longest path using the graph with the lower confidence bound (LCB) of the estimated traversal time for each edge.
%An LCB is assigned to each edge, and, using the number of selections $T_i(n-1)$ up to time $n-1$ and the observed mean $\hat{\theta}_i(n-1)$, $\text{LCB}_{n,i}$ for edge $i$ at time $n$ is defined as follows.
At each time $n$, an LCB for each edge $i$, denoted by $\text{LCB}_{n,i}$,  is calculated using the following equation:
    \begin{align*}
      \text{LCB}_{n,i} = 
      \hat{\theta}_i(n-1) - \sqrt{\frac{3\ln n}{2 T_i (n-1)}},
    \end{align*}
 where $\hat{\theta}_i(n-1)$ is the sample mean up to time $n-1$.
 %At each time $n$, assuming
Assuming that the traversal time of each edge $i$ is $\text{LCB}_{n,i}$,
CLCB-MMLP selects the set of $k$ paths %whose longest path traversal time
so that the longest traversal time of the component paths is the minimum among $k$ paths in $\mathcal{F}$. %is selected among the sets of $k$ paths from the sink to the source.
    
  \subsection{Experimental results}
    The experimental results are shown in Figure \ref{fig:exp-result}.

  \begin{figure}[t]
    \centering
    \includegraphics[width=8.0cm]{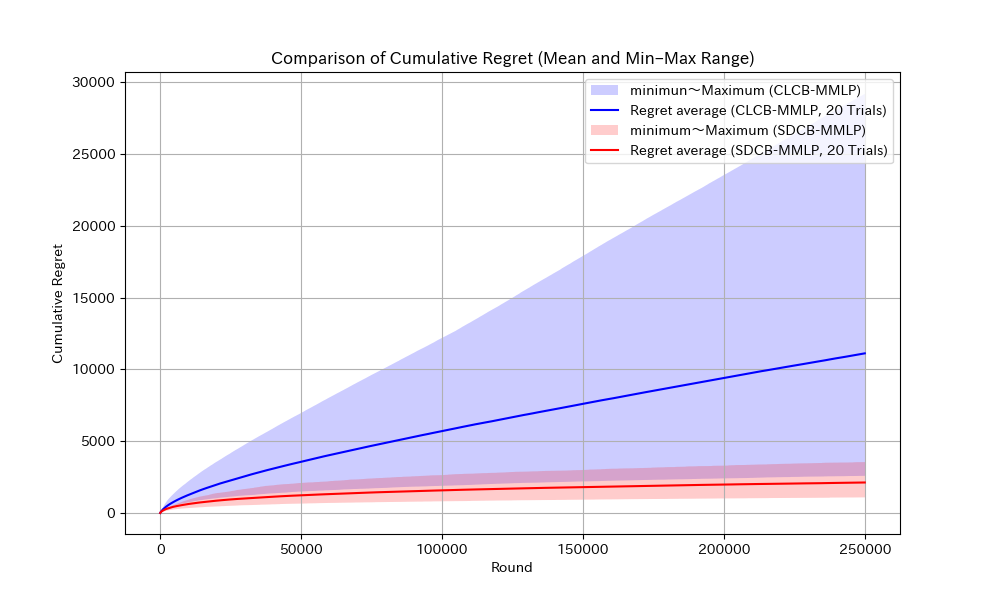}
    \caption{Experimental results}
    \label{fig:exp-result}
  \end{figure}

  %From the experimental results, for SDCB-MMLP, it was confirmed that the regret converges after a sufficient amount of time has elapsed in all the trials.
  %From this result, it was confirmed experimentally that the proposed method converges to the optimal solution as time elapses.
 From the results, we can see that the regret for SDCB-MMLP converges after a sufficient amount of time %has elapsed
  in all the trials, which indicates that the solution selected by SDCB-MMLP converges to the optimal solution.
 
  On the other hand, although the regret for CLCB-MMLP is similar to that of SDCB-MMLP in some trials, overall the regret did not converge and tended to increase roughly linearly.
  This is because the CUCB-based method uses estimators of expected rewards instead of estimators of the full distribution, and therefore there exist graphs for which the optimal set of paths cannot be learned.
  
  \subsection{%Limitations of the CUCB-based method
  Graph that CLCB-MMLP fails}

  In the $k$-path selection problem, the optimal solution depends not only on the mean but also on the shape of the reward distributions.
  For example, in the graph in Figure \ref{fig:kensyou-graph}, there are two parallel edges from vertex 1 to vertex 2: the upper edge has traversal time always equal to $1$ while the lower edge takes values $0$ or $2$ with equal probability. Their means are equal.
  %When two paths are selected, by the capacity
In the setting with $k=2$,
  the edge from vertex 1 to vertex 3 is necessarily traversed due to the capacity constraint, and the traversal time of this path is always 3.
  Therefore, among the two paths, the expected traversal time of the longest path is 3 when the upper edge is selected among the edges from vertex 1 to vertex 2, whereas it is 3.5 (4 with probability 1/2 and 3 with probability 1/2) when the lower edge is selected, and hence the optimal set of paths contains the upper edge.
  However, %the CUCB-based method, which computes only with the mean, cannot discern this difference.
  CLCB-MMLP cannot distinguish the optimal path set (paths $0\to 3$ and $0\to 1\xrightarrow{\text{upper edge}}2\to 3$) from the suboptimal path set
  (paths $0\to 3$ and $0\to 1\xrightarrow{\text{lower edge}}2\to 3$) because $\hat{\theta}_i(n-1)$ for both edges $i=(1\xrightarrow{\text{upper edge}}2)$ and $i=(1\xrightarrow{\text{lower edge}}2)$ converge to $1$ as $n\rightarrow \infty$.
\begin{figure}[tb]
  \centering
  \begin{tikzpicture}[
      scale=0.60, transform shape,
      >={Stealth[length=2.6mm,bend]},
      vtx/.style={circle,draw,thick,minimum size=8mm,inner sep=0pt},
      ed/.style ={->,thick},
      lbl/.style={font=\small,inner sep=2pt,fill=white},
    ]
    % ---- Vertices ----
    \node[vtx] (v0) at (0.0,0) {$0$};
    \node[vtx] (v1) at (3.2,0) {$1$};
    \node[vtx] (v2) at (6.4,0) {$2$};
    \node[vtx] (v3) at (9.6,0) {$3$};
    % ---- Edges ----
    \draw[ed] (v0) -- node[lbl,above]{$X\equiv 1,\ u=1$} (v1);
    \draw[ed] (v1) to[bend left=35]
      node[lbl,above]{$X_A\equiv 1,\ u=1$} (v2);
    \draw[ed] (v1) to[bend right=35]
      node[lbl,below]{$X_B\in\{0,2\}\ (\text{each }1/2),\ u=1$} (v2);
    \draw[ed] (v2) -- node[lbl,above]{$X\equiv 1,\ u=1$} (v3);
    \draw[ed] (v0) to[bend right=60]
      node[lbl,below]{$X\equiv 3,\ u=1$} (v3);   % <- ")" and ";" were supplied
  \end{tikzpicture}
  \caption{Graph %for verification (start vertex $0$, end vertex $3$).
    that CLCB-MMLP fails (its source and sink vertices are $0$ and $3$, respectively).
   % $X$ denotes the traversal time of an edge, and $u$ denotes the number of paths that can pass through.
    $X$ and $u$ denote the traversal time and capacity of an edge, respectively.
    $1\to 2$ is a multiple edge, and $X_A(\equiv 1)$ and $X_B$
    ($\Pr[X_B=0]=\Pr[X_B=2]=1/2$) have the same mean
    ($\mathbb{E}[X_A]=\mathbb{E}[X_B]=1$).}
  \label{fig:kensyou-graph}
\end{figure}
\iffalse
When we consider taking $k=2$ paths at every time in this case, the optimal solution $\p$ consists of the following two paths.
  \begin{itemize}
    \item $0 \to 3 $
    \item $0 \to 1 \to 2 \to 3$ (for $1\to 2$, $X_A$ is traversed.)
  \end{itemize}
  Moreover, whether or not the solution is optimal depends on which of the two edges the upper path of the graph traverses.
\fi
  
  \begin{table}[tb]
    \caption{Experimental results}
    \label{tab:exp2-result}
    \centering
  {\small
  \begin{tabular}{@{}rrrr@{}}
      \toprule
    \multirow{2}{*}{Algorithm}&  \multicolumn{2}{c}{\#selections of edge $1\rightarrow 2$} & \multirow{2}{*}{\begin{minipage}{1.5cm}Cumulative regret\end{minipage}}\\
      \cmidrule(lr){2-3}
                  & upper edge & lower edge & \\
      \midrule
      CLCB-MMLP & 2510 & 2490 & 1245 \\
      SDCB-MMLP & 4963 &   37 &  18.5 \\
      \bottomrule
  \end{tabular}
  }
  \end{table}
%  When CLCB-MMLP and SDCB-MMLP are actually run with $k=2$ paths taken at every time, the numbers of times each edge was selected up to time $t=5000$ were as shown in Table \ref{tab:exp2-result}.
%In Table \ref{tab:exp2-result}, $X_A$ and $X_B$ mean the numbers of selections of the deterministic edge of $1\to 2$ (whose traversal time is always 1) and of the stochastic edge (whose traversal time is $0$ or $2$ with equal probability), respectively.
%From the above results, it can be confirmed that there exist graphs that cannot be estimated by CUCB-type methods.
  Table~\ref{tab:exp2-result} shows the result of the experiment for our problem using the graph in Figure~\ref{fig:kensyou-graph} up to time $t=5000$.
From Table \ref{tab:exp2-result}, SDCB-MMLP selects $1\xrightarrow{\text{upper edge}}2$ almost all the time and achieves the optimal solution, whereas CLCB-MMLP selects $1\xrightarrow{\text{upper edge}}2$ and $1\xrightarrow{\text{lower edge}}2$ approximately equally and the solution did not converge.
This example confirms that there exist graphs that cannot be estimated by CUCB-type methods.

\section{Conclusion}
  In this paper we formulated an online version of the problem of selecting $k$ paths from the source to the sink in a directed graph so as to minimize the length of the longest selected path, and we proposed an algorithm to solve it.
  We derived an upper bound on the regret of the proposed algorithm and validated its effectiveness in numerical experiments.
  For grid graphs with randomly generated expected traversal times, the cumulative regret of the proposed method grows sublinearly and the method outperforms a CUCB-based baseline \cite{pmlr-v38-kveton15} in our experiments.
  
  For future work, it would be interesting to study regret lower bounds and to reduce the computational cost of the algorithm.
  A regret lower bound would indicate whether our upper bound is tight and how much room remains for improvement.
  The proposed method requires O(t) time to compute expectations from the CDF at time t because the empirical CDF is constructed from all past samples; this is more expensive than CUCB. Moreover, we used an exhaustive search to find the optimal set of paths, which is exponential-time.
  Since selecting $k$ paths that minimize the maximum path length is conjectured to be NP-hard, designing efficient approximation algorithms is desirable. Developing an online algorithm that employs an approximation oracle while retaining regret guarantees is another promising direction.

%%%%%%%%%%%%%%%%%%%%%%%%%%%%%%%%%%%%%%

\bibliographystyle{plain} %参考文献出力スタイル
\bibliography{cite} %hoge.bibから拡張子を外した名前

\appendix
\section{Proof of Lemma~\ref{lem:nearcase}}\label{app:proof-nearcase}

\subsection{Proof}

Let $S_t$ be the set of edges selected by the set of paths $\p(t)$, and let $\kappa$ be the upper bound on the number of edges traversed in one round:
\begin{align*}
      S_t =& \{ i \in E | a_i (\p(t)) > 0 \},\\
      \kappa=&\max_t |S_t|\leq \min(\bar{K}, |E|).
\end{align*}
The following corollary is necessary for the proof of Lemma~\ref{lem:g-happen}, which is used in the upper bound proof.
 \begin{corollary}\label{cor:onestep}
    Suppose that at time $t$, $|\hat{F}_{i,T_i(t-1)}(x)-F_i(x)|<c_{i,t}$ holds for all $i,x$. Then
    \begin{equation}\label{eq:onestep}
      \Delta_{\p(t)}\;\leq\; 2\sum_{i\in S_t}a_i(\p(t))\,c_{i,t}
      \;=\;\sqrt{6\ln t}\sum_{i\in S_t}\frac{a_i(\p(t))}{\sqrt{T_{i}(t-1)}}
    \end{equation}
holds.
  \end{corollary}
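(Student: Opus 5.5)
The argument follows the standard SDCB template: combine optimism (the optimistic distributions make every selection look at least as good as it truly is) with a Lipschitz-type bound on how far $r$ can move when the edge CDFs are perturbed.

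\textbf{Step 1 (optimism).} On the event in the hypothesis, $\bar F_{i,t}(x)\ge F_i(x)$ for all $i$ and $x$, so $\bar D_i(t)$ is first-order stochastically dominated by $D_i$: optimistic traversal times are smaller. The map $(x_{i,j})\mapsto \max_j\sum_{i\in p_j}x_{i,j}$ is coordinatewise nondecreasing, and the $X_{i,j}$ are independent. Coupling each $X_{i,j}\sim D_i$ with $\bar X_{i,j}\sim\bar D_i(t)$ by the quantile transform gives $\bar X_{i,j}\le X_{i,j}$ pointwise, hence $r_{\bar{\boldsymbol D}(t)}(\p)\le r_{\boldsymbol D}(\p)$ for every $\p\in\F$. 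Since line~\ref{algline:oracle} chooses $\p(t)$ as the minimizer of $r_{\bar{\boldsymbol D}(t)}$,
\[
r_{\bar{\boldsymbol D}(t)}(\p(t))\le r_{\bar{\boldsymbol D}(t)}(\p^\star)\le r_{\boldsymbol D}(\p^\star).
\]
Therefore $\Delta_{\p(t)}\le r_{\boldsymbol D}(\p(t))-r_{\bar{\boldsymbol D}(t)}(\p(t))$.

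\textbf{Step 2 (perturbation bound).} It remains to bound this difference by $\sum_i a_i(\p(t))\sup_x|F_i(x)-\bar F_{i,t}(x)|$. Enumerate the $m=\sum_i a_i(\p(t))$ random variables $X_{i,j}$ with $i\in p_j$, and replace them one at a time by their optimistic counterparts (a hybrid argument). Each intermediate expectation is of a function bounded in $[0,K]$. More useful is that, conditioned on all other variables, the function is nondecreasing and $1$-Lipschitz in the single variable being replaced, because the max of sums is $1$-Lipschitz in each coordinate. For a single variable $Y\in[0,1]$ and a nondecreasing $1$-Lipschitz $g$, integration by parts gives
\[
\E_D[g(Y)]-\E_{\bar D}[g(Y)]=\int_0^1 (\bar F(x)-F(x))\,dg(x)\le \sup_x|\bar F(x)-F(x)|.
\]
Summing over the $m$ replacements gives the bound $\sum_i a_i(\p(t))\sup_x|\bar F_{i,t}(x)-F_i(x)|$.

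\textbf{Step 3 (confidence width).} By construction and the hypothesis, $0\le \bar F_{i,t}-F_i\le (\hat F_{i,T_i(t-1)}-F_i)+c_{i,t}<2c_{i,t}$. The truncation at $1$ and the value at $x=1$ only make the gap smaller. This yields $\Delta_{\p(t)}\le 2\sum_{i\in S_t}a_i(\p(t))c_{i,t}$. Substituting $c_{i,t}=\sqrt{3\ln t/(2T_i(t-1))}$ gives $2c_{i,t}=\sqrt{6\ln t}/\sqrt{T_i(t-1)}$, which is exactly (\ref{eq:onestep}). Note that $T_i(t-1)\ge1$ after initialization.

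\textbf{Main obstacle.} The hybrid step is the delicate part. For the integration-by-parts inequality to apply, the conditional function must be monotone and Lipschitz. It must also stay so when the remaining variables are drawn from a mixture of true and optimistic distributions, which holds because of independence and because both properties are pointwise. A further subtlety is that copies of the same edge on different paths are independent, so each copy is replaced separately; this is why the bound carries the factor $a_i$ rather than $1$.
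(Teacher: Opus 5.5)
Your proof is correct. Its outer structure matches the paper's: optimism from $\overline{F}_{i,t}\ge F_i$, optimality of $\p(t)$ under $\bar{\boldsymbol D}(t)$, and a perturbation bound with $\Lambda_i=2c_{i,t}$. The difference is in how you prove the perturbation bound, which is the paper's Lemma~\ref{lem:dominance}(ii).

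The paper runs the hybrid argument over whole paths, in $k$ steps. It holds $L_{\bar m}=\max_{j'\ne m}L_{j'}$ fixed and uses the tail-integral identity of Lemma~\ref{lem:part} to write each step as $\E\bigl[\int_{L_{\bar m}}^{K}(\overline{F}_{L_m}-F_{L_m})\,d\ell\bigr]$. It then extends the integral to $[0,K]$, which needs path-level dominance $\overline{F}_{L_m}\ge F_{L_m}$. Finally it identifies that integral with the gap in expected path lengths, which by linearity equals $\sum_{i\in p_m}\int_0^1(\overline{F}_i-F_i)\,dx$.

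You instead run the hybrid over the $\sum_i a_i(\p(t))$ individual edge copies. You use only that $\max_j\sum_{i\in p_j}x_{i,j}$ is $1$-Lipschitz (and nondecreasing) in each coordinate. This is the same integration-by-parts identity applied at a finer granularity; Lemma~\ref{lem:part} is the special case $g(y)=\max(c,y)$.

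What your route buys is generality. It needs neither Lemma~\ref{lem:part} nor dominance of the path sums. It yields a two-sided bound $|r_{\boldsymbol D}(\p)-r_{\boldsymbol D'}(\p)|\le\sum_i a_i(\p)\sup_x|F_i-F_i'|$ for arbitrary $\boldsymbol D'$. It also applies to any coordinatewise $1$-Lipschitz objective, with monotonicity needed only for the optimism step.

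The paper's route is tailored to the max-of-sums structure and uses fewer hybrid steps. Its intermediate bound in terms of mean gaps $\E_{D_i}[X]-\E_{\overline{D}_i}[X]$ is also available in yours: when $\overline{F}\ge F$, bound $\int(\overline{F}-F)\,dg\le\int_0^1(\overline{F}-F)\,dx$. The final constant is the same.

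Your quantile coupling in Step~1 also makes rigorous the monotonicity claim that Lemma~\ref{lem:dominance}(i) only asserts.
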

\begin{proof}
See Appendix~\ref{app:proof-onestep}.
\end{proof}

    By Theorem 4 of Kveton et al. (2015)\cite{pmlr-v38-kveton15}, there exist two positive decreasing sequences $1=\b_0 > \b_1 > \cdots , \a_1 > \a_2 > \cdots $ converging to 0 that satisfy the following condition (\ref{eq:abin6}):
    \begin{align}\label{eq:abin6}
      \sqrt{6}\sum_{u=1}^{\infty} \frac{\b_{u-1} - \b_u}{\sqrt{\a_u}} \leq 1,\ 
      \sum_{u=1}^{\infty} \frac{\a_u}{\b_u} < 267
    \end{align}
    
    For $n_{\text{init}}<t\le n$ and $u \in \mathbb{Z}_+$, where $n$ is an arbitrarily fixed stopping time, we define $m_{u,t}, A_{u,t}$ and $\mathcal{G}_{u,t}$ as follows:
    \begin{align*}
      &m_{u,t} = 
      \begin{cases}
          \a_u \Big( \frac{\bar{a} \; \kappa}{\Delta_{\p(t)}} \Big)^2 \ln n \quad &\Delta_{\p(t)} > 0 \\
          + \infty \quad &\Delta_{\p(t)} = 0, 
      \end{cases}\\
      &A_{u,t} = \{ i \in S_t | T_i(t-1)  \leq m_{u,t} \}, \ 
      \mathcal{G}_{u,t} = \{ |A_{u,t}| \geq \b_u \kappa \}
    \end{align*}
    Then, the following lemma holds.
    \begin{lemma}\label{lem:g-happen}
      If the event $\lnot\mathcal{E}_t$ occurs, then the event $\mathcal{G}_{u,t}$ also occurs for some $u \in \mathbb{Z}_+$.
    \end{lemma}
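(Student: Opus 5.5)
The plan is to argue by contradiction, following the layered-counting argument of Kveton et al.\ (Theorem 4 of \cite{pmlr-v38-kveton15}). I combine the one-step bound of Corollary~\ref{cor:onestep} with the properties (\ref{eq:abin6}) of the sequences $\a_u,\b_u$.

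First I dispose of the trivial case $\Delta_{\p(t)}=0$. Then $m_{u,t}=+\infty$, so $A_{u,t}=S_t$, which is nonempty since $\p(t)$ consists of $k\ge1$ paths. Because $\b_u\to0$, we have $\b_u\kappa\le 1\le|S_t|$ for large $u$, so $\G_{u,t}$ occurs. Hence assume $\Delta_{\p(t)}>0$, assume $\lnot\mathcal{E}_t$, and suppose for contradiction that $|A_{u,t}|<\b_u\kappa$ for every $u\ge1$.

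The main step bounds $\Delta_{\p(t)}$ using Corollary~\ref{cor:onestep}. From (\ref{eq:onestep}), together with $a_i(\p(t))\le\bar a$ and $\ln t\le\ln n$,
\[
\Delta_{\p(t)}\le \bar a\sqrt{6\ln n}\sum_{i\in S_t}\frac{1}{\sqrt{T_i(t-1)}} .
\]
Set $A_{0,t}=S_t$, so that $|A_{0,t}|\le\kappa=\b_0\kappa$. Since $\a_u$ is decreasing, $m_{u,t}$ is decreasing in $u$, so the sets $A_{u,t}$ are nested. Because $\a_u\to0$, eventually $m_{u,t}<1\le T_i(t-1)$; this uses that every edge has been observed after the initialization phase. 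So $A_{u,t}=\emptyset$ for large $u$, and $S_t$ is the disjoint union of the layers $A_{u-1,t}\setminus A_{u,t}$, $u\ge1$.

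For $i$ in layer $u$ we have $T_i(t-1)>m_{u,t}$, hence
\[
\frac{1}{\sqrt{T_i(t-1)}}<\frac{\Delta_{\p(t)}}{\bar a\kappa\sqrt{\a_u\ln n}} .
\]
Summing over the layers gives
\[
\Delta_{\p(t)}<\frac{\sqrt6\,\Delta_{\p(t)}}{\kappa}\sum_{u\ge1}\frac{|A_{u-1,t}|-|A_{u,t}|}{\sqrt{\a_u}} .
\]
The strict inequality holds because $S_t\neq\emptyset$.

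Next I apply Abel summation to the last sum:
\[
\sum_{u\ge1}\frac{|A_{u-1,t}|-|A_{u,t}|}{\sqrt{\a_u}}
=\frac{|A_{0,t}|}{\sqrt{\a_1}}+\sum_{u\ge1}|A_{u,t}|\Big(\frac{1}{\sqrt{\a_{u+1}}}-\frac{1}{\sqrt{\a_u}}\Big).
\]
All coefficients here are nonnegative because $\a_u$ is decreasing. So I can replace $|A_{u,t}|$ by the upper bounds $\b_u\kappa$ and $|A_{0,t}|\le\b_0\kappa$. Summing back by parts, the right-hand side is at most $\kappa\sum_{u\ge1}(\b_{u-1}-\b_u)/\sqrt{\a_u}$. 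By the first condition in (\ref{eq:abin6}) this is at most $\kappa/\sqrt6$.

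Plugging this in gives $\Delta_{\p(t)}<\Delta_{\p(t)}$, a contradiction, so $\G_{u,t}$ holds for some $u$.

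The step I expect to need care is this summation-by-parts bookkeeping. It requires the nestedness of the $A_{u,t}$ and the fact that layers eventually exhaust $S_t$, so that the rearrangement is over a finite sum. It also requires tracking where the strict inequality enters, so that the contradiction is genuine.
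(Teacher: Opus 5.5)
Your proof is correct and follows essentially the same route as the paper. Both argue by contradiction, split $S_t$ into layers by the thresholds $m_{u,t}$ (the paper phrases this through the increasing complements $\tilde{A}_{u,t}=S_t\setminus A_{u,t}$), apply Abel summation, and close with Corollary~\ref{cor:onestep} and the first condition in (\ref{eq:abin6}); your separate treatment of the case $\Delta_{\p(t)}=0$ is a small improvement, since there $m_{u,t}=+\infty$ and the paper's layered chain does not literally apply, whereas you dispose of it directly via $|A_{u,t}|=|S_t|\ge 1\ge\beta_u\kappa$ for large $u$.
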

    \begin{proof}
     Assume that the event $\lnot\mathcal{E}_t$ occurs and that the event $\G_{u,t}$ occurs for no $u \in \mathbb{Z}_+$.
     Then $|A_{u,t}| < \b_u \kappa$ holds for all $u\in \mathbb{Z}_+$.
     Letting $\tilde{A}_{u,t} = S_t \backslash A_{u,t}$, we have $\tilde{A}_{1,t}\subseteq \tilde{A}_{2,t}\subseteq\cdots$, and $T_{i}(t-1) > m_{u,t} $ holds for every edge $i \in \tilde{A}_{u,t}$. Hence, in the same way as Lemma 5 of \cite{generalreward-Chen},
     \begin{align*}
      &\sum_{i \in S_t} \frac{1}{\sqrt{T_{i}(t-1)}} \\
      \leq & \sum_{i\in \tilde{A}_{1,t}}\frac{1}{\sqrt{m_{1,t}}}+\sum_{u=1}^{\infty}\sum_{i\in\tilde{A}_{u+1,t}\setminus \tilde{A}_{u,t}}\frac{1}{\sqrt{m_{u+1,t}}}\\
      =&\frac{|\tilde{A}_{1,t}|}{\sqrt{m_{1,t}}}+\sum_{u=1}^{\infty}\frac{|\tilde{A}_{u+1,t}|-|\tilde{A}_{u,t}|}{\sqrt{m_{u+1,t}}}\\
      =&\frac{|S_t|}{\sqrt{m_{1,t}}}+\sum_{u=1}^{\infty}\frac{|A_{u,t}|-|A_{u+1,t}|}{\sqrt{m_{u+1,t}}}\\
      \leq& 
          \frac{|S_t|}{\sqrt{m_{1,t}}} + \sum_{u=1}^{\infty} |A_{u,t}| \left( \frac{1}{\sqrt{m_{u+1, t}}} - \frac{1}{\sqrt{m_{u,t}}} \right)\\
      <& \frac{\kappa}{\sqrt{m_{1,t}}} + \sum_{u=1}^{\infty} \b_u \kappa \left( \frac{1}{\sqrt{m_{u+1,t}}} - \frac{1}{\sqrt{m_{u,t}}} \right)\\
      =& \sum_{u=1}^{\infty} \frac{\kappa(\b_{u-1}-\b_u)}{\sqrt{m_{u,t}}}
     \end{align*}
Since inequality (\ref{eq:onestep}) holds by Corollary~\ref{cor:onestep} under the assumption that the event $\lnot\mathcal{E}_t$ occurs, using $a_i(\p(t))\leq \bar{a}$, we obtain
      \begin{align*}
        \Delta_{\p(t)} 
        &\leq 2 \bar{a} \sum_{i \in S_t} c_{i,t} 
        = \bar{a} \sqrt{6\ln t} \sum_{i \in S_t} \frac{1}{\sqrt{T_{i}(t-1)}}\\
        &< \bar{a} \sqrt{6 \ln n} \sum_{u=1}^{\infty} (\b_{u-1} - \b_u) \frac{\Delta_{\p(t)}}{\bar{a} \sqrt{ \a_u \ln n}}\\
        &= \sqrt{6}\sum_{u=1}^{\infty} \frac{\b_{u-1} - \b_u}{\sqrt{\a_u}}\Delta_{\p(t)} 
        \leq \Delta_{\p(t)} 
      \end{align*} 
      The last inequality follows from (\ref{eq:abin6}).
      Thus the contradiction $\Delta_{\p(t)}<\Delta_{\p(t)}$ arises, and therefore it has been shown that if the event $\lnot\mathcal{E}_t$ occurs,
      then the event $\mathcal{G}_{u,t}$ occurs for some $u \in \mathbb{Z}_+$.
    \end{proof}

    \begin{proof}[\textbf{Proof of Lemma~\ref{lem:nearcase}}]
    The claim can be shown as follows, in the same way as the proof of Theorem 1 of \cite{generalreward-Chen}.
    By Lemma \ref{lem:g-happen},
    \begin{align}
       \mathbbm{1} \{ \lnot\mathcal{E}_t \}\leq \sum_{u=1}^{\infty}\mathbbm{1} \{ \mathcal{G}_{u,t} \}\label{ineq:Gut}
\end{align}    
    holds.
For an edge $i$, we define $\mathcal{G}_{i,u,t}$ as
\[
 \mathcal{G}_{i,u,t} = \mathcal{G}_{u,t} \land \{ i \in A_{u,t} \}
\]
Then, when $\Delta_{\p(t)} > 0$, we have $A_{u,t}\subseteq S_t \subseteq E_{\text{bad}}$, and
    on $\mathcal{G}_{u,t}$, $|A_{u,t}| \geq \b_u \kappa$ holds, so

    \begin{align}
      \mathbbm{1}\{ \mathcal{G}_{u,t}, \; \Delta_{\p(t)} > 0 \} \leq \frac{1}{\b_u \kappa} \sum_{i \in E_{\text{bad}}} \mathbbm{1} \{ \mathcal{G}_{i,u,t} , \Delta_{\p(t)} > 0 \}\label{ineq:Giut}
    \end{align}
holds.
%In a round in which arm $i$ is selected, $T_{i}(t-1)$ increases by at least 1, so the number of rounds with $T_{i}(t-1)\leq \tau $ is at most $\tau$.

    Let $N_i$ be the number of suboptimal sets of paths containing each $i\in E_{\bad}$, and let $P_{i,1},\dots P_{i,N_i}$ be these $N_i$ suboptimal sets of paths.
Here, the order of the sets of paths $P_{i,l}$ is determined so that the gaps $\Delta_{P_{i,l}}$ from the optimal set of paths are in descending order    
$\Delta_{P_{i,1}} \geq \dots \geq \Delta_{P_{i, N_i}} = \Delta_{i, \min}$, and for conveniencei, in the proof we let $\Delta_{P_i,0} = + \infty$.
Then
\begin{align}
   \mathbbm{1} \{ \mathcal{G}_{i,u,t} , \Delta_{\p(t)} > 0 \}\leq \sum_{l=1}^{N_i}\mathbbm{1} \{ \mathcal{G}_{i,u,t} , \p(t)=P_{i,l} \}\label{ineq:giutpt}
\end{align}
holds. Hence, from inequalities (\ref{ineq:Gut}), (\ref{ineq:Giut}) and (\ref{ineq:giutpt}),
%    Then, $\a_k (\bar{a}\kappa / \Delta_{i,0})^2 = 0$.
%    When $\p(t) = P_{i,l}^\bad$, we have $\Delta_{\p(t)} = \Delta_{i,l}$ and $m_{u,t} = \a_k (\bar{a}\kappa / \Delta_{i,l})^2 \ln n$.
%    In what follows, using Lemma \ref{lem:g-happen},
    \begin{align*}
      &\sum_{t=n_{\text{init}}+1}^n \mathbbm{1} \{ \lnot\mathcal{E}_t \}\Delta_{\p(t)}\\ 
      &\leq \sum_{i \in E_\bad} \sum_{u=1}^\infty \sum_{t=n_{\text{init}}+1}^n \sum_{l=1}^{N_i}
       \mathbbm{1} \{ \mathcal{G}_{i,u,t}, \p(t) = P_{i,l} \} \frac{\Delta_{P_{i,l}}}{\b_u \kappa}\\
      &= \sum_{i \in E_\bad} \sum_{u=1}^\infty \sum_{t=n_{\text{init}}+1}^n \sum_{l=1}^{N_i}\\ 
        &\ \ \ \ \ \ \mathbbm{1} \left\{ T_{i}(t-1)\leq \a_u \Big(\frac{\bar{a}\kappa}{\Delta_{P_{i,l}}}\Big)^2 \ln n, 
         \p(t) = P_{i,l} \right\} \frac{\Delta_{P_{i,l}}}{\b_u \kappa}\\
      &=\sum_{i \in E_\bad} \sum_{u=1}^\infty \sum_{t=n_{\text{init}}+1}^n \sum_{l=1}^{N_i} \sum_{j=1}^{l}\\ 
        &\mathbbm{1} \Bigg\{ \a_u \Big( \frac{\bar{a}\kappa}{\Delta_{P_{i,j-1}}} \Big)^2 \ln n 
        \leq T_{i}(t-1) \leq \a_u \Big( \frac{\bar{a}\kappa}{\Delta_{P_{i,j}}} \Big)^2\ln n,\\
        &\hspace{5.5cm}\p(t) = P_{i,l} \bigg\} \frac{\Delta_{P_{i,l}}}{\b_u \kappa}\\
      &\leq \sum_{i \in E_\bad} \sum_{u=1}^\infty \sum_{t=n_{\text{init}}+1}^n \sum_{l=1}^{N_i} \sum_{j=1}^{l}\\
       &\mathbbm{1} \Bigg\{ \a_u \Big( \frac{\bar{a}\kappa}{\Delta_{P_{i,j-1}}} \Big)^2 \ln n 
        \leq T_{i}(t-1) \leq \a_u \Big( \frac{\bar{a}\kappa}{\Delta_{P_{i,j}}} \Big)^2\ln n,\\
        &\hspace{5.5cm}\p(t) = P_{i,l} \bigg\} \frac{\Delta_{P_{i,j}}}{\b_u \kappa}\\
      &\leq \sum_{i \in E_\bad} \sum_{u=1}^\infty \sum_{t=n_{\text{init}}+1}^n \sum_{l=1}^{N_i} \sum_{j=1}^{N_i}\\
  &\mathbbm{1} \Bigg\{ \a_u \Big( \frac{\bar{a}\kappa}{\Delta_{P_{i,j-1}}} \Big)^2 \ln n 
        \leq T_{i}(t-1) \leq \a_u \Big( \frac{\bar{a}\kappa}{\Delta_{P_{i,j}}} \Big)^2\ln n,\\
        &\hspace{5.5cm}\p(t) = P_{i,l} \bigg\} \frac{\Delta_{P_{i,j}}}{\b_u \kappa}\\    
      &\leq \sum_{i \in E_\bad} \sum_{u=1}^\infty \sum_{t=n_{\text{init}}+1}^n \sum_{j=1}^{N_i}\\
  &\mathbbm{1} \Bigg\{ \a_u \Big( \frac{\bar{a}\kappa}{\Delta_{P_{i,j-1}}} \Big)^2 \ln n 
        \leq T_{i}(t-1) \leq \a_u \Big( \frac{\bar{a}\kappa}{\Delta_{P_{i,j}}} \Big)^2\ln n,\\
        &\hspace{5.5cm}i\in S_t \bigg\} \frac{\Delta_{P_{i,j}}}{\b_u \kappa}\\        
      &\leq \sum_{i \in E_\bad} \sum_{u=1}^\infty \sum_{j=1}^{N_i}\\ 
        &\ \ \ \ \ \Biggl( \a_u \Big( \frac{\bar{a}\kappa}{\Delta_{P_{i,j-1}}} \Big)^2 \ln n - \a_u \Big( \frac{\bar{a}\kappa}{\Delta_{P_{i,j}}} \Big)^2 \ln n \Biggr) \frac{\Delta_{P_{i,j}}}{\b_u \kappa}\\
      &= \bar{a}^2 \kappa \Biggl( \sum_{u=1}^{\infty} \frac{\a_u}{\b_u} \Biggr) \ln n \sum_{i \in E_{\bad}}\sum_{j=1}^{N_i}\Biggl( \frac{1}{\Delta_{P_{i,j}}^2} - \frac{1}{\Delta_{P_{i,j-1}}^2} \Biggr)\Delta_{P_{i,j}} \\
      &\leq 267 \bar{a}^2 \kappa \ln n \sum_{i \in E_{\bad}}\sum_{j=1}^{N_i}\Biggl( \frac{1}{\Delta_{P_{i,j}}^2} - \frac{1}{\Delta_{P_{i,j-1}}^2} \Biggr)\Delta_{P_{i,j}}, 
    \end{align*}
    where the last inequality holds by Eq.~(\ref{eq:abin6}).
    For each $i\in E_{\bad}$,
    \begin{align*}
      &\sum_{j=1}^{N_i} \Biggl( \frac{1}{\Delta_{P_{i,j}}^2} - \frac{1}{\Delta_{P_{i,j-1}}^2 } \Biggr)\Delta_{P_{i,j}}\\ =&\frac{1}{\Delta_{P_{i,N_i}}}+\sum_{j=1}^{N_i-1}\frac{1}{\Delta^2_{P_{i,j}}}(\Delta_{P_{i,j}}-\Delta_{P_{i,j+1}})\\
      \le& \frac{1}{\Delta_{P_{i,N_i}}}+\int_{\Delta_{P_{i,N_i}}}^{\Delta_{P_{i,1}}}\frac{1}{x^2}dx\\
      =& \frac{2}{\Delta_{P_{i,N_i}}}-\frac{1}{\Delta_{P_{i,1}}}< \frac{2}{\Delta_{i,\min}}
    \end{align*}
    From the above,
    \begin{align*}
      \sum_{t> n_{\text{init}}}^T \mathbbm{1} \{ \lnot\mathcal{E}_t \}\Delta_{\p(t)} \leq 267 \bar{a}^2 \kappa \ln n \sum_{i \in E_{\bad}} \frac{2}{\Delta_{i,\min}} \\
      = \bar{a}^2 \kappa \sum_{i \in E_\bad} \frac{534}{\Delta_{i,\min}} \ln n 
    \end{align*}
    
    Next, we directly evaluate the distribution-independent upper bound,  
    using the fact that the increment $a_i(\p(t))$ of the number of observations of each edge $i$ in one round is equal to $T_{i}(t) - T_{i}(t-1)$.
    From equation (\ref{eq:onestep}) of Corollary \ref{cor:onestep}, under the event $\lnot\mathcal{E}_t$,
    \begin{align*}
      \sum_{t=n_{\text{init}}+1}^n \mathbbm{1} \{\lnot\mathcal{E}_t \}\Delta_{\p(t)}
      \leq \sqrt{6 \ln n} \sum_{i \in E}\sum_{t = n_{\text{init}}+1}^n \frac{a_i(\p(t))}{\sqrt{T_{i}(t-1)}}
    \end{align*}

    %For each edge $i$, when it has been observed at least $T_{i}(t-1) \geq \bar{a}$ times, the number of observations $a_i(\p(t))$ at a given time is bounded by the maximum value $\bar{a}$ that it can take, and hence
For any edge $i$ with $T_{i}(t-1) \geq \bar{a}$, 
    \begin{align*}
      T_{i}(t) 
      &=    T_{i}(t-1) + a_i(\p(t)) \\
      &\leq T_{i}(t-1) + \bar{a} \\
      &\leq T_{i}(t-1) + T_{i}(t-1) = 2T_{i}(t-1)
    \end{align*}
    holds. Therefore, in a round in which $T_{i}(t) \leq 2T_{i}(t-1)$ holds,
    \begin{align*}
      \frac{a_i(\p(t))}{\sqrt{T_{i}(t-1)}} &= \frac{T_{i}(t)-T_{i}(t-1)}{\sqrt{T_{i}(t-1)}}\\
      &= \frac{\sqrt{T_{i}(t)} + \sqrt{T_{i}(t-1)}}{\sqrt{T_{i}(t-1)}}(\sqrt{T_{i}(t)} -\sqrt{T_{i}(t-1)}) \\
      &\leq (1 + \sqrt{2})( \sqrt{T_{i}(t)} -\sqrt{T_{i}(t-1)})
    \end{align*}
    holds.
    Hence,
    \begin{align*}
      \sum_{t=n_{\text{init}}+1}^n \mathbbm{1} \{\lnot\mathcal{E}_t \}\Delta_{\p(t)}
      &\leq \sqrt{6 \ln n} \sum_{i \in E}\sum_{t = n_{\text{init}}+1}^n \frac{a_i(\p(t))}{\sqrt{T_{i}(t-1)}}\\
      &\leq\sqrt{6 \ln n} \sum_{i \in E} ((1+\sqrt{2})\sqrt{T_{i}(n)}+\bar{a})
    \end{align*}
    holds.
    By the Cauchy--Schwarz inequality and
    $\sum_i T_{i}(n)= \sum_{t=1}^{n}\sum_{i\in E} a_i(\p(t)) \leq \bar{K}n$,
    \begin{align*}
      \sum_{i\in E} \sqrt{T_{i}(n)} 
      \leq \sqrt{|E|\sum_{i\in E}T_{i}(n)}
      \leq \sqrt{|E|\bar{K}n}
    \end{align*}
    From the above, we obtain the following distribution-independent upper bound.
    \begin{align*}
      &\sum_{t=n_{\text{init}}+1}^n \mathbbm{1} \{\lnot\mathcal{E}_t \}\Delta_{\p(t)}\\
      \leq& (1+\sqrt{2})\sqrt{6\ln n}\sqrt{|E|\bar{K}n} + \bar{a}|E|\sqrt{6\ln n}\\
      \leq& (1+\sqrt{2})\sqrt{6|E|\bar{K}n\ln n} + \bar{a}|E|\sqrt{6\ln n}
    \end{align*}
  \end{proof}

\subsection{Proof of Corollary~\ref{cor:onestep}}\label{app:proof-onestep}

  \begin{lemma}\label{lem:part}
    For a set of paths $\p=\{p_1,\dots,p_k\}$, let $L_j$ be the traversal time of the path $p_j$. Then, letting $L_{\bar{j}}= \max_{j' \ne j} L_{j'}$ denote the path length of the path $p_{j'}$ ($j'\neq j$) whose traversal time is the largest among the other paths,
    \begin{align*}
      &\E[\max(L_{\bar{j}},L_j)]= \E[L_{\bar{j}}]+\E[(L_j-L_{\bar{j}})_+]\\
      &= \E[L_{\bar{j}}]+\int_{L_{\bar{j}}}^{K}\big(1-F_{L_j}(\ell)\big)\,d\ell
    \end{align*}
    where $(L_j - L_{\bar{j}})_+ = \max(L_j - L_{\bar{j}} , 0)$ and $F_{L_j}$ is the CDF of the random variable $L_j$.
  \end{lemma}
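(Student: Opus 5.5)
The identity is pointwise followed by expectation, plus a layer-cake formula conditioned on $L_{\bar{j}}$. For every outcome,
\[
\max(L_{\bar{j}},L_j)=L_{\bar{j}}+(L_j-L_{\bar{j}})_+ .
\]
If $L_j\le L_{\bar{j}}$, the positive part vanishes and both sides equal $L_{\bar{j}}$. Otherwise both sides equal $L_j$. All quantities lie in $[0,K]$, because each path has at most $K$ edges with traversal times in $[0,1]$. So expectations are finite, and linearity gives the first equality $\E[\max(L_{\bar{j}},L_j)]=\E[L_{\bar{j}}]+\E[(L_j-L_{\bar{j}})_+]$.

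For the second equality, the key structural fact is that $L_j$ is independent of $L_{\bar{j}}$. This holds by the modelling assumption that the samples $X_{i,j}$ are drawn independently across paths, even for shared edges. Since $L_j$ is a function of $\{X_{i,j}\}_{i\in p_j}$ and $L_{\bar{j}}$ is a function of $\{X_{i,j'}\}_{j'\ne j}$, the two are independent.

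I would condition on $L_{\bar{j}}=y$. For a nonnegative variable, $\E[Z]=\int_0^\infty \mathbb{P}(Z>s)\,ds$, so for fixed $y\in[0,K]$,
\[
\E[(L_j-y)_+]=\int_0^\infty \mathbb{P}(L_j>y+s)\,ds=\int_y^K \big(1-F_{L_j}(\ell)\big)\,d\ell .
\]
The upper limit is $K$ because $1-F_{L_j}(\ell)=0$ for $\ell\ge K$. Fubini is justified since the integrand is bounded on a bounded domain. By independence, this conditional computation holds with $y$ replaced by $L_{\bar{j}}$, which gives
\[
\E[(L_j-L_{\bar{j}})_+]=\E\Big[\int_{L_{\bar{j}}}^K (1-F_{L_j}(\ell))\,d\ell\Big].
\]
This is what the statement's notation means: the integral, with its random lower limit, is inside the expectation.

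The main point needing care is the interpretation and independence step. The displayed formula has the random lower limit $L_{\bar{j}}$ outside any expectation. I would state explicitly that the right-hand side is to be read as $\E_{L_{\bar{j}}}\big[\int_{L_{\bar{j}}}^K(1-F_{L_j}(\ell))\,d\ell\big]$. I would also verify that using the unconditional CDF $F_{L_j}$ is valid, which requires the cross-path independence of the $X_{i,j}$. Without that independence, one would need the conditional CDF of $L_j$ given $L_{\bar{j}}$ instead. The remaining steps are routine.
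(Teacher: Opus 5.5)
Your proposal is correct and follows essentially the same route as the paper: the pointwise identity $\max(L_{\bar{j}},L_j)=L_{\bar{j}}+(L_j-L_{\bar{j}})_+$ with linearity of expectation, then the layer-cake representation, the shift $\ell=L_{\bar{j}}+t$, truncation of the integral at $K$, and Fubini--Tonelli. The only difference is that you explicitly condition on $L_{\bar{j}}=y$ and invoke cross-path independence to justify using the unconditional CDF $F_{L_j}$. The paper instead takes $\E_{L_j}$ with $L_{\bar{j}}$ held fixed, and invokes that independence only when it applies the lemma in Lemma~\ref{lem:dominance}(ii), where it afterwards takes the expectation over $L_{\bar{m}}$, matching your reading of the random lower limit.
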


  \begin{proof}
      Since $\max(L_{\bar{j}},L_j) = L_{\bar{j}} + (L_j - L_{\bar{j}})_+$ holds, %by the linearity of the expectation operation,
\[
\E[\max(L_{\bar{j}},L_j)]= \E[L_{\bar{j}}]+\E[(L_j-L_{\bar{j}})_+]
\]
holds by the linearity of the expectation operation.
    We transform the second term on the right-hand side of this equation into an integral form.
    Now, for any nonnegative real number $Z \geq 0 $, the following integral identity holds:
    \begin{align*}
      Z=\int_{0}^{\infty} \mathbbm{1} \{  Z > t \} dt.
    \end{align*}
 %   Setting $Z= (L_j - L_{\bar{j}})_+$, since $t\geq 0$,
Since 
    \begin{align*}
      (L_j - L_{\bar{j}})_+ > t \iff L_j - L_{\bar{j}} > t
    \end{align*}
  holds for $t\geq 0$,
    \begin{align*}
      (L_j - L_{\bar{j}})_+ = \int_{0}^{\infty} \mathbbm{1} \{ L_j - L_{\bar{j}} > t \}dt
    \end{align*}
    holds by setting $Z= (L_j - L_{\bar{j}})_+$.
   % Substituting the variable $l=L_{\bar{j}}+t$, %$dt=dl$,
 Substituting a new variable $\ell$ for $L_{\bar{j}}+t$, equation $dt=d\ell$ holds    
    and the range of integration $t:0\to \infty$ changes to $\ell:L_{\bar{j}} \to \infty$, thus we obtain
    \begin{align*}
      (L_j - L_{\bar{j}})_+ =& \int_{L_{\bar{j}}}^{\infty} \mathbbm{1} \{ L_j > \ell \}d\ell.
    \end{align*}
Using the fact that $L_j \leq K$,
%    Here, using the fact that the paths are bounded, we have $L \leq K$. Hence, for a variable $l$ with $l\geq K$, the condition $L>l$ inside the indicator variable does not hold, so
%    the integrand becomes 0. Therefore, the range of integration can be bounded from above, and
    \begin{align*}
      (L_j - L_{\bar{j}})_+ = \int_{L_{\bar{j}}}^{K} \mathbbm{1} \{ L_j > \ell \}d\ell
    \end{align*}
holds.
    Taking the expectation of both sides, we have
    \begin{align*}
      \E_{L_j}[(L_j - L_{\bar{j}})_+] = \E_{L_j} \Biggl[ \int_{L_{\bar{j}}}^{K} \mathbbm{1} \{ L_j > \ell \}d\ell \Biggr].
    \end{align*}
    Switching the order of integration by Fubini--Tonelli theorem, %the probability measure and the integral can be interchanged, and hence
    \begin{align*}
      \E_{L_j}[(L_j - L_{\bar{j}})_+]  =& \int_{L_{\bar{j}}}^{K}  \E_{L_j} \Biggl[\mathbbm{1} \{ L_j > \ell \} \Biggr]d\ell\\
      =& \int_{L_{\bar{j}}}^{K} \mathbbm{P}[L_j > \ell] d\ell 
    \end{align*}
    holds.
    From $\mathbbm{P}[L_j > \ell] = 1 - \mathbbm{P}[L_j \leq \ell] = 1 - F_{L_j}(\ell)$,
    \begin{align*}
     \E_{L_j}[(L_j - L_{\bar{j}})_+]  = 
      \int_{L_{\bar{j}}}^{K} (1- F_{L_j}(\ell))d\ell
    \end{align*}
    holds. Therefore, the following holds.
        \begin{align*}
      \E[\max(L_{\bar{j}},L_j)]
      = \E[L_{\bar{j}}]+\int_{L_{\bar{j}}}^{K}\big(1-F_{L_j}(\ell)\big)\,d\ell
    \end{align*}
  \end{proof}

 \begin{lemma}\label{lem:dominance}
    For each $i\in E$, let $\overline{D}_i$, $D_i$ be distributions on $[0,1]$, and let their CDFs be $\overline{F}_i$, $F_i$.
    \begin{enumerate}[label=(\roman*)]
      \item If $\overline{F}_i(x)\geq F_i(x)$ for all $i\in E$ and $x\in[0,1]$, then for any $\p\in\mathcal{F}$,
      \[
        r_{\overline{\boldsymbol{D}}}(\p)\leq r_{\boldsymbol{D}}(\p).
      \]
      \item If, furthermore, $\overline{F}_i(x)-F_i(x)\leq \Lambda_i$($\Lambda_i\geq 0$) for all $i\in E$ and $x\in [0,1]$, then for any $\p\in\mathcal{F}$,
      \begin{align*}
        0\;\leq\; r_{\boldsymbol{D}}(\p)-r_{\overline{\boldsymbol{D}}}(\p)\;
        &\leq\sum_{i\in E}a_i(\p)\,\Lambda_i.
      \end{align*}
    \end{enumerate}
  \end{lemma}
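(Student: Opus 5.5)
We prove Lemma~\ref{lem:dominance} by a coupling argument that swaps the distribution of one edge-sample at a time, combined with monotonicity of the objective. Write $r_{\boldsymbol D}(\p)=\E[\Phi(\{X_{i,j}\})]$, where $\Phi(\{x_{i,j}\})=\max_{1\le j\le k}\sum_{i\in p_j}x_{i,j}$. The function $\Phi$ is coordinatewise nondecreasing. It is also $1$-Lipschitz in each coordinate: increasing a single $x_{i,j}$ by $\delta\ge0$ raises $\Phi$ by at most $\delta$, since only the sum of path $p_j$ changes and the maximum of sums moves by at most the change in one summand.

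For each sample we use the quantile coupling. Let the $U_{i,j}$ be independent uniform variables on $(0,1)$, and set $X_{i,j}=F_i^{-1}(U_{i,j})$ and $\overline X_{i,j}=\overline F_i^{-1}(U_{i,j})$, using generalized inverses. The joint laws then match those defining $r_{\boldsymbol D}(\p)$ and $r_{\overline{\boldsymbol D}}(\p)$, because all samples, including repeated uses of the same edge by different paths, are independent. Since $\overline F_i\ge F_i$ pointwise, we have $\overline F_i^{-1}(u)\le F_i^{-1}(u)$, so $\overline X_{i,j}\le X_{i,j}$ almost surely. Monotonicity of $\Phi$ gives $\Phi(\overline X)\le\Phi(X)$, and taking expectations proves (i).

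For (ii), monotonicity and the per-coordinate Lipschitz property give
\[
0\le \Phi(X)-\Phi(\overline X)\le \sum_{j=1}^k\sum_{i\in p_j}\bigl(X_{i,j}-\overline X_{i,j}\bigr).
\]
Taking expectations, each term equals $\E[X_{i,j}]-\E[\overline X_{i,j}]$. By the tail formula already used in Lemma~\ref{lem:part}, for variables on $[0,1]$,
\[
\E[X_{i,j}]-\E[\overline X_{i,j}]=\int_0^1\bigl(\overline F_i(x)-F_i(x)\bigr)\,dx\le\Lambda_i .
\]
Summing over the $a_i(\p)$ occurrences of each edge $i$ gives the bound $\sum_i a_i(\p)\Lambda_i$.

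The main obstacle is making the coupling rigorous for the optimistic CDF $\overline F_{i,n}$, which jumps to $1$ at $x=1$ and is defined only on $[0,1]$. We must check that it is a valid CDF, meaning right-continuous and nondecreasing with value $1$ at $x=1$. We must also check that the generalized-inverse comparison $\overline F_i^{-1}\le F_i^{-1}$ follows from $\overline F_i\ge F_i$. Both are standard facts about generalized inverses, and the tail-integral identity then holds on $[0,1]$ without boundary issues. An alternative that avoids the coupling is to replace one edge-sample at a time and apply Lemma~\ref{lem:part}-style integrals conditioned on the other samples. This hybrid (telescoping) route yields the same bound and could be used if the coupling argument is deemed too informal.
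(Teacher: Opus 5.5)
Your proof is correct, but for part (ii) it takes a different route from the paper.

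\textbf{The paper's route.} It treats (i) by a one-line appeal to monotonicity under first-order dominance. It proves (ii) by a hybrid argument over whole paths. With $H_m$ the law in which paths $1,\dots,m$ use $\overline{\boldsymbol D}$, it telescopes $r_{\boldsymbol D}(\p)-r_{\overline{\boldsymbol D}}(\p)=\sum_{m}\bigl(\E_{H_{m-1}}-\E_{H_m}\bigr)[\max_j L_j]$. It then conditions each term on $L_{\bar m}=\max_{j'\neq m}L_{j'}$, which is independent of $L_m$, and applies Lemma~\ref{lem:part} to get $\E\bigl[\int_{L_{\bar m}}^{K}(\overline F_{L_m}-F_{L_m})\,d\ell\bigr]$. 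Finally it drops the restriction $\ell\ge L_{\bar m}$ and reduces to per-edge integrals.

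\textbf{Your route.} The quantile coupling replaces all of this with one pointwise inequality, $0\le\Phi(X)-\Phi(\overline X)\le\sum_j\sum_{i\in p_j}(X_{i,j}-\overline X_{i,j})$. After that, only the one-dimensional tail formula is needed.

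\textbf{What each buys.} Your coupling gives three things:
\begin{itemize}
\item a fully rigorous (i);
\item a proof of the fact the paper invokes without argument inside the hybrid step, namely $\overline F_{L_m}\ge F_{L_m}$ (dominance survives summation along a path);
\item no essential dependence on independence across paths: it would go through with one $U_i$ per edge if paths sharing an edge saw the same sample, whereas the paper's conditioning needs $L_{\bar m}$ independent of $L_m$.
\end{itemize}
The paper's route, in exchange, stays at the level of CDFs and reuses Lemma~\ref{lem:part} without constructing a coupling. Both arguments pass through the same intermediate bound $\sum_{j}\bigl(\E[L_j^{\boldsymbol D}]-\E[L_j^{\overline{\boldsymbol D}}]\bigr)$ and end at $\sum_i a_i(\p)\Lambda_i$.

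\textbf{Your closing concern.} Whether $\overline F_{i,n}$ is a valid CDF is not part of this lemma, which assumes $\overline D_i$ is a distribution on $[0,1]$. It matters only when the lemma is applied in Corollary~\ref{cor:onestep}. It is fine there, since $\min\{\hat F_{i,T_i(n-1)}+c_{i,n},1\}$ with value $1$ at $x=1$ is nondecreasing and right-continuous.
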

  \begin{proof}
    (i) The assumption $\overline{F}_i\geq F_i$ means that $\overline{D}_i$ is first-order stochastically dominated by $D_i$ (the probability mass is shifted toward smaller values). Since $r_D(\p)$ is nondecreasing in each component, replacing the distribution $D_i$ by $\overline{D}_i$ does not increase the expectation. Hence $r_{\overline{\boldsymbol{D}}}(\p)\leq r_{\boldsymbol{D}}(\p)$.

    (ii) Using the hybrid argument, which is commonly used in cryptography, we give the following proof by replacing the paths one by one from the true distribution to the optimistic distribution.
    
   For a path $p_u$ ($1\le u\le k$), suppose that the observed traversal time $X_{i,u}$ for an edge $i\in p_u$ is determined as
    \begin{align*}
      X_{i,u} \sim 
      \begin{cases}
        \D_i & (u \leq j) \\  
        D_i  & (u > j)   
      \end{cases}
    \end{align*}
    and let $H_j$ ($0\le j\le k$) be the joint distribution of $\{X_{i,u}\}$ determined in this way. Under the distribution $H_j$,
    the observed values for each path are independent, and the values from an edge traversed by multiple paths are also mutually independent.
    Thus,
    \begin{align*}
    &  r_{\boldsymbol{D}}(\p)-r_{\overline{\boldsymbol{D}}}(\p)\\=&\E_{H_0}[\max_j L_j]-\E_{H_k}[\max_j L_j]\\
                                  =&\sum_{m=1}^{k}\Big(\E_{H_{m-1}}[\max_j L_j]-\E_{H_{m}}[\max_j L_j]\Big).
    \end{align*}
    holds.
    In the $m$-th term, only the distribution of path $m$ changes from $\boldsymbol{D}$ to $\overline{\boldsymbol{D}}$, while the other paths are fixed and independent.
    Hence, letting $L_{\bar{m}}=\max_{j'\neq m}L_{j'}$ be the maximum among the other paths,
    $L_{\bar{m}}$ is generated from the same distribution under both $H_{m-1}$ and $H_m$, and is independent of $L_m$. Therefore, for a fixed $L_{\bar{m}}$, by Lemma \ref{lem:part},
    \begin{align*}
   &\E[\max(L_{\bar{m}},L_m)]=
      \E[L_{\bar{m}}]+\int_{L_{\bar{m}}}^{K}\big(1-F_{L_m}(\ell)\big)\,d\ell
    \end{align*}
    holds. Thus, letting $L_m^{\boldsymbol{D}}$, $L_m^{\overline{\boldsymbol{D}}}$ be the total path lengths of the path $p_m$ drawn from the distributions $\boldsymbol{D}$, $\overline{\boldsymbol{D}}$, respectively,
    \begin{align*} 
      &\E[\max(L_{\bar{m}},L^{\boldsymbol{D}}_m)]-\E[\max(L_{\bar{m}},L^{\overline{\boldsymbol{D}}}_m)]\\
      =&\int_{L_{\bar{m}}}^{K}\big(\overline{F}_{L_m}(\ell)-F_{L_m}(\ell)\big)\,d\ell\;\geq\;0.
    \end{align*}
    The last inequality follows from $\overline{F}_{L_m}\geq F_{L_m}$, which holds for the same reason as in (i).
    Taking the expectation with respect to $L_{\bar{m}}$,
    \begin{align*}  
      &\E_{H_{m-1}}[\max_j L_j]-\E_{H_{m}}[\max_j L_j]\\
      &= \E_{L_{\bar{m}}} \bigg[ \int_{L_{\bar{m}}}^{K}\big(\overline{F}_{L_m}(\ell)-F_{L_m}(\ell)\big) d\ell \bigg]\\
      &= \E_{L_{\bar{m}}}  \bigg[ \int_0^{K} \mathbbm{1}\{ L_{\bar{m}} \leq \ell \} \big(\overline{F}_{L_m}(\ell)-F_{L_m}(\ell)\big) d\ell \bigg] \qquad \\%(the range of integration written with an indicator variable) \\
      &= \int_0^{K} \E_{L_{\bar{m}}} [\mathbbm{1}\{ L_{\bar{m}} \leq \ell \}] \big(\overline{F}_{L_m}(\ell)-F_{L_m}(\ell)\big) d\ell \qquad \\% ( by Tonelli's theorem ) \\
    &\le \int_0^{K} \big(\overline{F}_{L_m}(\ell)-F_{L_m}(\ell)\big) d\ell
    \end{align*}
    is obtained. Therefore,
    \begin{align*}
      r_{\boldsymbol{D}}(\p)-r_{\overline{\boldsymbol{D}}}(\p)&\leq\sum_{j=1}^{k}\int_0^{K}\big(\overline{F}_{L_j}(\ell)-F_{L_j}(\ell)\big)\,d\ell \\
&=\sum_{j=1}^{k}\sum_{i\in p_j}\int_0^{1}\big(\overline{F}_{i}(\ell)-F_{i}(\ell)\big)\,d\ell\\
&\le \sum_{j=1}^{k}\sum_{i\in p_j}\Lambda_i=\sum_{i\in E}a_i(\p)\Lambda_i
    \end{align*}
    Therefore, the inequality in (ii) follows.    
  \end{proof}

  \begin{proof}[\textbf{Proof of Corollary~\ref{cor:onestep}}]
    By assumption $F_i(x)<\hat{F}_{i,T_i(t-1)}(x)+c_{i,t}$ and fact $F_i(x)\leq 1$,
    \[\overline{F}_{i,t}(x)=\min\{\hat{F}_{i,T_i(t-1)}(x)+c_{i,t},1\}\geq F_i(x) \] holds.
    Also, from $\hat{F}_{i,T_i(t-1)}(x)<F_i(x)+c_{i,t}$, we have $\overline{F}_{i,t}(x)\leq F_i(x)+2c_{i,t}$.
    Hence, with $\Lambda_i=2c_{i,t}$, Lemma \ref{lem:dominance}(i)(ii) can be applied, and combining this with %the optimality
    $r_{\overline{\boldsymbol{D}}}(\p(t))\leq r_{\overline{\boldsymbol{D}}}(\p^\star)$ ($\p(t)$'s optimality),
    \[
      r_{\boldsymbol{D}}(\p(t))-2\sum_i a_i(\p(t))c_{i,t}\leq r_{\overline{\boldsymbol{D}}}(\p(t))\leq r_{\overline{\boldsymbol{D}}}(\p^\star)\leq r_{\boldsymbol{D}}(\p^\star),
    \]
    that is, $\Delta_{\p(t)}=r_{\boldsymbol{D}}(\p(t))-r_{\boldsymbol{D}}(\p^\star)\leq 2\sum_i a_i(\p(t))c_{i,t}$. The transformation of the equality is due to $2c_{i,t}=2\sqrt{3\ln t/(2T_{i}(t-1))}=\sqrt{6\ln t}/\sqrt{T_{i}(t-1)}$.
  \end{proof}

\end{document}